\documentclass[11pt]{article}
\usepackage{fullpage}

\usepackage{amsfonts}

\usepackage{natbib}
\usepackage{epsfig}
\usepackage{amsthm}
\usepackage{amsmath}
\usepackage{amssymb}
\usepackage{algorithm}
\usepackage{algorithmic}
\usepackage{subfig}
\usepackage{graphicx}
\usepackage{color}
\usepackage[section] {placeins}
\usepackage{epstopdf}

\usepackage{hyperref}



\usepackage[section] {placeins}

\newcounter{thm_counter}
\newcounter{lem_counter}

\newcounter{ass_counter}

\newtheorem{theorem}[thm_counter]{Theorem}
\newtheorem{lemma}[lem_counter]{Lemma}
\newtheorem{assumption}[ass_counter]{Assumption}

\newcommand{\rca}{\color{black}}
\newcommand{\rcb}{\color{black}}
\newcommand{\rr}{\color{black}}
\newcommand{\rb}{\color{black}}
\newcommand{\rcc}{\color{black}}

\frenchspacing

\title{Dictionary LASSO: Guaranteed Sparse Recovery under Linear Transformation}
\author{Ji Liu$^{\dag}$,~Lei Yuan$^{\ddag}$, and Jieping Ye$^{\ddag}$\\
ji-liu@cs.wisc.edu\quad \{lei.yuan, jieping.ye\}@asu.edu\\
$^{\dag}$Department of Computer Sciences, University of Wisconsin-Madison\\
$^{\ddag}$Department of Computer Science and Engineering, Arizona State University}

\begin{document}
\maketitle
\thispagestyle{plain} \pagestyle{plain}

\begin{abstract}
We consider the following signal recovery problem: given a
measurement matrix $\Phi\in \mathbb{R}^{n\times p}$ and a noisy
observation vector $c\in \mathbb{R}^{n}$ constructed from $c =
\Phi\theta^* + \epsilon$ where $\epsilon\in \mathbb{R}^{n}$ is the
noise vector whose entries follow i.i.d. centered sub-Gaussian
distribution, how to recover the signal $\theta^*$ if $D\theta^*$ is
sparse {\rca under a linear transformation} $D\in\mathbb{R}^{m\times
p}$ ({\rcc we call $D$ by dictionary matrix})? One natural method using convex optimization is to solve the
following problem: $$\min_{\theta}~{1\over 2}\|\Phi\theta - c\|^2 +
\lambda\|D\theta\|_1$${\rcc which is referred to as the ``dictionary LASSO''.} This paper provides an upper bound of the estimate error and shows the consistency property of {\rcc the dictionary LASSO} by
assuming that the design matrix $\Phi$ is a Gaussian random matrix.
Specifically, we show 1) in the noiseless case, if the condition
number of $D$ is bounded and the measurement number $n\geq
\Omega(s\log(p))$ where $s$ is the sparsity number, then the true
solution can be recovered with high probability; and 2) in the noisy
case, if the condition number of $D$ is bounded and the measurement
increases faster than $s\log(p)$, that is, $s\log(p)=o(n)$, the
estimate error converges to zero with probability 1 when $p$ and $s$
go to infinity. Our results are consistent with those for the
special case $D=\bold{I}_{p\times p}$ (equivalently LASSO) and
improve the existing analysis. The condition number of $D$ plays a
critical role in our analysis. We consider the condition numbers in
two cases including the fused LASSO and the random graph: the
condition number in the fused LASSO case is bounded by a constant,
while the condition number in the random graph case is bounded with
high probability if $m\over p$ (i.e., $\#\text{edge}\over
\#\text{vertex}$) is larger than a certain constant. Numerical
simulations are consistent with our theoretical results.
\end{abstract}


\section{Introduction}
The sparse signal recovery problem has been well studied recently
from the theory aspect to the application aspect in many areas
including compressive sensing \citep{candes07b, candes07a},
statistics \citep{Meinshausen06, Ravikumar08, bunea07, Lounici08, Koltchinskii08},
machine learning \citep{zhao06, zhang09b, wainwright09, LiuJMLR12},
and signal processing \citep{Romberg08,Donoho06, zhang09a}. {\rcb The
key idea is to use the $\ell_1$ norm to relax the $\ell_0$ norm (the
number of nonzero entries). This paper considers a specific type of
sparse signal recovery problems, that is, the signal is assumed to
be sparse under a linear transformation $D$.} It includes the
well-known fused LASSO \citep{TibshiraniSRZK05} as a special case.
The theoretical property of such problem has not been well
understood yet, although it has achieved success in many
applications \citep{Chan98, TibshiraniSRZK05, CandesRT06,
SharpnackRS12}. Formally, we define the problem as follows: given a
measurement matrix $\Phi\in \mathbb{R}^{n\times p}$ ($p\gg n$) and a
noisy observation vector $c\in \mathbb{R}^{n}$ constructed from $c =
\Phi \theta^* + \epsilon$ where $\epsilon\in \mathbb{R}^{n}$ is the
noise vector whose entries follow i.i.d. centered sub-Gaussian
distribution\footnote{\rca Note that this ``identical distribution''
assumption can be removed; see \citet{zhang09a}. For simplification
of analysis, we enforce this condition throughout this paper.}, how
to recover the signal $\theta^*$ if $D\theta^*$ is sparse where
$D\in\mathbb{R}^{m\times p}$ is a constant matrix dependent on the
specific application\footnote{\rcb We study the most general case of
$D$, and thus our analysis is applicable for both $m\geq p$ or
$m\leq p$.}? A natural model for such type of sparsity recovery
problems is:
\begin{align}
\min_{\theta}:~{1\over 2}\|\Phi \theta - c\|^2 +
\lambda\|D\theta\|_0.
\end{align}
The least square term is from the sub-Gaussian noise assumption and
the second term is due to the sparsity requirement. Since this
combinatorial optimization problem is NP-hard, the conventional
$\ell_1$ relaxation technique can be applied to make it tractable,
resulting in the following convex model:
\begin{align}
\min_\theta:~{1\over 2}\|\Phi \theta - c\|^2 +
\lambda\|D\theta\|_1.\label{eqn_org_ls}
\end{align}
Such model includes many well-known sparse formulations as special
cases:
\begin{itemize}
\item The fused LASSO \citep{TibshiraniSRZK05, FriedmanHHT07} solves
\begin{equation}\label{eqn_fusedLASSO}
\min_{\theta}:~{1\over 2}\|\Phi\theta - c\|^2 + \lambda_1
\|\theta\|_1 + \lambda_2 \|Q\theta\|_1
\end{equation}
where $Q\in\mathbb{R}^{(p-1)\times p}$ is defined as
the total variance matrix $Q=[\bold{I}_{(p-1)\times (p-1)}; \bold{0}_{p-1}] - [\bold{0}_{p-1};
\bold{I}_{(p-1)\times (p-1)}]$, that is,
\[ Q=\left[\begin{array}{ccccc}
    1 & -1 & 0 & ... & 0 \\
    0 & 1 & -1 & ... & 0 \\
    ... & ... & ... & ... & ... \\
    0 & 0 &... & 1 & -1\end{array} \right].\]
One can write Eq.~\eqref{eqn_fusedLASSO} in the form of
Eq.~\eqref{eqn_org_ls} by letting $\lambda=1$ and $D$ be the
conjunction of the identity matrix and the total variance matrix,
that
is, $$D = \left[\begin{array}{c}\lambda_1\bold{I}_{p\times p} \\
    \lambda_2Q\end{array}\right].$$
\item The general $K$ dimensional changing point detection problem
  \citep{CandesRT06, Needell2012, Needell2012m} can be expressed by
\begin{equation}
\begin{aligned}
  &\min_{\theta}:~{1\over 2}\sum_{(i_1,i_2,\cdots, i_K)\in S}(\theta_{i_1,i_2,\cdots, i_K}-c_{i_1,i_2,\cdots, i_K})^2 +\\
  &\lambda \sum_{i_1=1}^{I_1-1}\sum_{i_2=1}^{I_2-1}\cdots\sum_{i_K=1}^{I_K-1}(|\theta_{i_1,i_2,\cdots, i_K}-\theta_{i_1+1,i_2,\cdots,
  i_K}|+\\&
  \cdots + |\theta_{i_1,i_2,\cdots, i_K}-\theta_{i_1,i_2,\cdots,
    i_K+1}|)
\end{aligned}
\label{eqn_totalvariance}
\end{equation}
where $\theta\in\mathbb{R}^{I_1\times I_2\times\cdots I_K}$ is a $K$
dimensional tensor with a stepwise structure and $S$ is the set of
indices. The second term is used to measure the total variance. The
changing point is defined as the point where the signal changes. One
can properly define $D$ to rewrite Eq.~\eqref{eqn_totalvariance} in
the form of Eq.~\eqref{eqn_org_ls}. In addition, if the structure of
the signal is piecewise constant, then one can replace the second
term by
\begin{align*}
  &\lambda
  \sum_{i_1=2}^{I_1-1}\sum_{i_2=2}^{I_2-1}\cdots\sum_{i_K=2}^{I_K-1}
  (|2\theta_{i_1,i_2,\cdots, i_K}-\theta_{i_1+1,i_2,\cdots, i_K}-\theta_{i_1-1,i_2,\cdots, i_K}| \\
  &+ \cdots +
   |2\theta_{i_1,i_2,\cdots, i_K}-\theta_{i_1,i_2,\cdots,
    i_K+1}-\theta_{i_1,i_2,\cdots, i_K-1}|).
\end{align*}
It can be written in the form of Eq.~\eqref{eqn_org_ls} as well.
\item {\rcb The second term of
\eqref{eqn_totalvariance}, that is, the total variance, is defined
as the sum of differences between two neighboring entries (or
nodes). A graph can generalize this definition by using edges to
define neighboring entries rather than entry indexes.} Let $G(V,E)$
be a graph. One has
\begin{align}
\min_{\theta\in \mathbb{R}^{|V|}}:~{1\over 2}\|\Phi \theta - c\|^2 +
\lambda\sum_{(i,j)\in E}|\theta_i-\theta_j|,\label{eqn_randomgraph}
\end{align}
{\rcb where $\sum_{(i,j)\in E}|\theta_i-\theta_j|$ defines the total
variance over the graph $G$.} The $k^{th}$ edge between nodes $i$
and $j$ corresponds to the $k^{th}$ row of the matrix $D\in
\mathbb{R}^{|E|\times |V|}$ with zero at all entries except
$D_{ki}=1$ and $D_{kj}=-1$. Taking $\Phi = \bold{I}_{p\times p}$,
one obtains the edge LASSO \citep{SharpnackRS12}.
\end{itemize}

This paper studies the theoretical properties of {\rcc the dictionary LASSO in~\eqref{eqn_org_ls}} by providing an upper bound of the
estimate error, that is, $\|\hat\theta-\theta^*\|$ where
$\hat{\theta}$ denotes the estimation. The consistency property of
this model is shown by assuming that the design matrix $\Phi$ is a
Gaussian random matrix. Specifically, we show 1) in the noiseless
case, if the condition number of $D$ is bounded and the measurement
number $n\geq \Omega(s\log(p))$ where $s$ is the sparsity number,
then the true solution can be recovered under some mild conditions
with high probability; and 2) in the noisy case, if the condition
number of $D$ is bounded and the measurement number increases faster
than $s\log (p)$, that is, $n=O(s\log(p))$, then the estimate error
converges to zero with probability 1 under some mild conditions when
$p$ goes to infinity. Our results are consistent with those for the
special case $D=\bold{I}_{p\times p}$ (equivalently LASSO) and {\rcb
improve the existing analysis in \citet{CandesENR10, VaiterPGC12}.
To the best of our knowledge, this is the first work that
establishes the consistency properties for the general
problem~\eqref{eqn_org_ls}.} The condition number of $D$ plays a
critical role in our analysis. We consider the condition numbers in
two cases including the fused LASSO and the random graph: the
condition number in the fused LASSO case is bounded by a constant,
while the condition number in the random graph case is bounded with
high probability if $m\over p$ (that is,
$\#\text{edge}\over\#\text{vertex}$) is larger than a certain
constant. Numerical simulations are consistent with our theoretical
results.

\subsection{Notations and Assumptions}
Define
\begin{align*}
\rho^+_{\Psi, Y}(l_1,l_2) =&
\max_{h\in\mathbb{R}^{l_1}\times\mathcal{H}(Y,l_2)}{\|\Psi
h\|^2\over \|h\|^2},\\
 \rho^-_{\Psi, Y}(l_1,l_2) =& \min_{h\in
\mathbb{R}^{l_1}\times \mathcal{H}(Y,l_2)}{\|\Psi h\|^2\over
\|h\|^2},
\end{align*}
where $l_1$ and $l_2$ are nonnegative integers, $Y$ is the dictionary matrix, and $\mathcal{H}(Y,l_2)$ is the union of all subspaces spanned by
$l_2$ columns of $Y$:
$${\rcb\mathcal{H}(Y,l_2)=\{Yv~|~\|v\|_0\leq l_2\}.}$${\rcb Note
that the length of $h$ is the sum of $l_1$ and the number of rows of $Y$ (which is in general not equal to $l_2$). {\rb The definition of
$\rho^+_{\Psi, Y}(l_1,l_2)$ and $\rho^-_{\Psi, Y}(l_1,l_2)$ is inspired by
the D-RIP constant in \cite{CandesENR10}. Recall that the D-RIP constant $\delta_d$ is defined by the smallest quantity such that
$$(1-\delta_d)\|h\|^2\leq\|\Psi h\|^2 \leq (1+\delta_d)\|h\|^2\quad \forall~h\in \mathcal{H}(Y,l_2).$$One can verify that $\delta_d=\max \{\rho^+_{\Psi, Y}(0, l_2)-1, 1-\rho^-_{\Psi, Y}(0,
l_2)\}$ if $\Psi$ satisfies the D-RIP condition in terms of the
sparsity $l_2$ and the dictionary $Y$.}} Denote $ \rho^+_{\Psi, Y}(0,
l_2)$ and $\rho^-_{\Psi, Y}(0,l_2)$ as $\rho^+_{\Psi, Y}(l_2)$ and
$\rho^-_{\Psi, Y}(l_2)$ respectively for short.

Denote the compact singular value decomposition (SVD) of $D$ as
$D=U\Sigma V_\beta^T$. Let $Z=U\Sigma$ and its pseudo-inverse be
$Z^+=\Sigma^{-1}U^T$. One can verify that $Z^+Z=I$.
$\sigma_{\min}(D)$ denotes the minimal nonzero singular value of $D$
and $\sigma_{\max}(D)$ denotes the maximal one, that is, the
spectral norm $\|D\|$. One has $\sigma_{\min}(D)=\sigma_{\min}(Z) =
\sigma_{\max}^{-1}(Z^+)$ and
$\sigma_{\max}(D)=\sigma_{\max}(Z)=\sigma_{\min}^{-1}(Z^+)$. Define
$$\kappa := {\sigma_{\max}(D)\over \sigma_{\min}(D)}={\sigma_{\max}(Z)\over \sigma_{\min}(Z)}.$$ Let $T_0$ be
the support set of $D\theta^*$, that is, a subset of
$\{1,2,\cdots,m\}$, with $s:=|T_0|$. Denote $T_0^c$ as its
complementary index set with respect to $\{1,2,\cdots,m\}$. Without
loss of generality, we assume that $D$ does not contain zero rows.
Assume that $c=\Phi \theta +\epsilon$ where $\epsilon \in
\mathbb{R}^{n}$ and all entries $\epsilon_i$'s are i.i.d. centered
sub-Gaussian random variables with sub-Gaussian norm $\Delta$
(Readers who are not familiar with the sub-Gaussian norm can treat
$\Delta$ as the standard derivation in Gaussian random variable). In
discussing the dimensions of the problem and how they are related to
each other in the limit (as $n$ and $p$ both approach $\infty$), we
make use of order notation. If $\alpha$ and $\beta$ are both
positive quantities that depend on the dimensions, we write $\alpha
= O(\beta)$ if $\alpha$ can be bounded by a fixed multiple of
$\beta$ for all sufficiently large dimensions. We write $\alpha =
o(\beta)$ if for {\em any} positive constant $\phi>0$, we have
$\alpha \le \phi \beta$ for all sufficiently large dimensions. We
write $\alpha = \Omega(\beta)$ if both $\alpha=O(\beta)$ and
$\beta=O(\alpha)$. Throughout this paper, a Gaussian random matrix
means that all entries follow i.i.d. standard Gaussian distribution
$\mathcal{N}(0,1)$.  {\rr Denote the $\ell_{\infty, 2}$ norm of
$Q\in\mathbb{R}^{m\times n}$ as $\|Q\|_{\infty, 2} = \max_{j\in
\{1,\cdots, n\}}\|Q_{j}\|$ where $Q_j$ is the $j^{th}$ column of
$Q$.}

\subsection{Related Work}
\citet{CandesENR10} proposed the following formulation to solve the
problem in this paper:
\begin{equation}\label{eqn_sc}
\begin{aligned}
\min_{\theta}:&~\|D\theta\|_1~~~~s.t.:~\|\Phi\theta - c\|\leq
\varepsilon,
\end{aligned}
\end{equation}
where $D\in\mathbb{R}^{m\times p}$ is {\rcb assumed to have
orthogonal columns and $\varepsilon$ is taken as the upper bound of
$\|\epsilon\|$. They showed that the estimate error is bounded by
$C_0\varepsilon + C_1\| (D{\theta}^*)_{T^c}\|_1/ \sqrt{|T|}$ with
high probability if $\sqrt{n}\Phi\in\mathbb{R}^{n\times p}$ is a
Gaussian random matrix\footnote{Note that the ``Gaussian random
matrix'' defined in \citet{CandesENR10} is slightly different from
ours. In \citet{CandesENR10}, $\Phi\in\mathbb{R}^{n\times p}$ is a
Gaussian random matrix if each entry of $\Phi$ is generated from
$\mathcal{N}(0, 1/{n})$. Please refer to Section 1.5 in
\citet{CandesENR10}. Here we only restate the result in
\citet{CandesENR10} by using our definition for Gaussian random
matrices.} with $n\geq \Omega(s\log m)$, where $C_0$ and $C_1$ are
two constants. Letting $T=T_0$ and $\varepsilon=\|\epsilon\|$, the
error bound turns out to be $C_0\|\epsilon\|$. This result shows
that in the noiseless case, with high probability, the true signal
can be exactly recovered. In the noisy case, assume that
$\epsilon_i$'s ($i=1,\cdots,n$) are i.i.d. centered sub-Gaussian
random variables, which implies that $\|\epsilon\|^2$ is bounded by
$\Omega(n)$ with high probability. Note that since the measurement
matrix $\Phi$ is scaled by $1/\sqrt{n}$ from the definition of
``Gaussian random matrix'' in \citet{CandesENR10}, the noise vector
should be corrected similarly. In other words, $\|\epsilon\|^2$
should be bounded by $\Omega(1)$ rather than $\Omega(n)$, which
implies that the estimate error in \citet{CandesENR10} converges to
a constant asymptotically.}

{\rcc
\cite{Needell2012, Needell2012m} studied the formulation in Eq.~\eqref{eqn_sc} and considered the special case that $D$ is the total variance matrix corresponding to the $K$ ($\geq 2$) dimensional signal $\theta\in\mathbb{R}^{p^K}$ with the sparsity level $S=\|D\theta^*\|_0$. Their analysis shows that if the measurement matrix $\Phi$ is properly designed and the measurement number $n$ satisfies $n\geq\Omega(SK\log(p^K))$, then in the noiseless case (that is, $\epsilon=0$), the true signal can be exactly recovered with high probability; in the noisy case, the estimate error is bounded by $\|\hat{\theta}-\theta^*\|\leq \sqrt{K}\log (p^K)\|\epsilon\|$. Following the analysis above, one can see that the estimate error diverges when $p$ goes to infinity.
}


{\rcb \citet{NamaDEG12} considered the noiseless case and analyzed
the formulation
\begin{equation}\label{eqn_sc_noiseless}
\begin{aligned}
\min_{\theta}:&~\|D\theta\|_1~~~~s.t.:~\Phi\theta = c,
\end{aligned}
\end{equation}
assuming all rows of $D$ to be in the general position, that is, any
$p$ rows of $D$ are linearly independent, which is violated by the
fused LASSO. An sufficient condition was proposed to recover the
true signal $\theta^*$ using the cosparse analysis.

\citet{VaiterPGC12} also considered the formulation in
Eq.~\eqref{eqn_org_ls} but mainly gave robustness analysis for this
model using the cosparse technique. A sufficient condition
(different from \citet{NamaDEG12}) to exactly recover the true
signal was given in the noiseless case. In the noisy case, they took
$\lambda$ to be a value proportional to $\|\epsilon\|$ and proved
that the estimate error is bounded by $\Omega(\|\epsilon\|)$ under
certain conditions. However, they did not consider the Gaussian ensembles for
$\Phi$; see \citet[Section 3.B]{VaiterPGC12}. 

The fused LASSO, a special case of Eq.~\eqref{eqn_org_ls}, was also
studied recently. The sufficient condition of detecting jumping
points is given by~\citet{KolarSX09}. A special fused LASSO
formulation was considered by~\citet{Rinaldo09} in which $\Phi$ was
set to be the identity matrix and $D$ to be the combination of the
identity matrix and the total variance matrix. \citet{SharpnackRS12}
proposed and studied the edge LASSO by letting $\Phi$ be the
identity matrix and $D$ be the matrix corresponding to the edges of
a graph.

\subsection{Organization}
The remaining of this paper is organized as follows. To build up a
unified analysis framework, we simplify the
formulation~\eqref{eqn_org_ls} in Section~\ref{sec:simplification}.
The main results are presented in Section~\ref{sec:mainresults}.
Section~\ref{sec:conditionnumber} discusses the value of an important
parameter in our main results in three cases: the fused LASSO, the
random graph, and the total variance dictionary matrix. The numerical simulation is presented to verify the
relationship between the estimate error and the condition number in
Section~\ref{sec:simulation}. We conclude this paper in
Section~\ref{sec:conclusion}. All proofs are provided in Appendix.

\section{Simplification} \label{sec:simplification}

As highlighted by \citet{VaiterPGC12}, the analysis for a wide $D\in
\mathbb{R}^{m\times p}$ (that is, $p > m$) significantly differs
from a tall $D$ (that is, $p<m$). To build up a unified analysis
framework, we use the singular value decomposition (SVD) of $D$ to
simplify Eq.~\eqref{eqn_org_ls}, which leads to an equivalent
formulation.

Consider the compact SVD of $D$: $D=U\Sigma V_\beta^T$ where $U\in
\mathbb{R}^{m\times r}$, $\Sigma\in\mathbb{R}^{r\times r}$($r$ is
the rank of $D$), and $V_\beta\in \mathbb{R}^{p\times r}$. We then
construct $V_\alpha\in \mathbb{R}^{p\times (p-r)}$ such that
\[V:=\left[
  \begin{array}{cc}
    V_\alpha & V_\beta
  \end{array}
\right]\in\mathbb{R}^{p\times p}\] is a unitary matrix. Let $\beta =
V_\beta^T\theta$ and $\alpha = V_\alpha^T\theta$. These two linear
transformations split the original signal into two parts as follows:
\begin{align}
\min_{\alpha,\beta}:~&{1\over 2}\left\|\Phi\left[V_\alpha~
V_\beta\right]\left[\begin{array}{c}
V_\alpha^T\\V_\beta^T\end{array}\right]\theta-c\right\|^2 +
\lambda\|U\Sigma V_\beta^T\theta\|_1\\
\equiv& {1\over 2}\left\|\left[\Phi V_\alpha~\Phi
V_\beta\right]\left[\begin{array}{c} \alpha\\
\beta\end{array}\right]-c\right\|^2 + \lambda\|U\Sigma \beta\|_1\\
\equiv& {1\over 2}\left\|A\alpha+B\beta-c\right\|^2 + \lambda\|Z
\beta\|_1 \label{eqn_original_half}
\end{align}
where $A=\Phi V_{\alpha}\in\mathbb{R}^{n\times (p-r)}$, $B=\Phi
V_\beta\in\mathbb{R}^{n\times r}$, and
$Z=U\Sigma\in\mathbb{R}^{m\times r}$. Let $\hat\alpha, \hat\beta$ be
the solution of Eq.~\eqref{eqn_original_half}. One can see the
relationship between $\hat\alpha$ and $\hat\beta$: $\hat\alpha =
-(A^TA)^{-1}A^T(B\hat\beta-c),$\footnote{Here we assume that $A^TA$
is invertible.} which can be used to further simplify
Eq.~\eqref{eqn_original_half}:
\[
\min_{\beta}:~f(\beta):={1\over 2}\|(I-A(A^TA)^{-1}A^T)(B\beta
-c)\|^2 + \lambda\|Z\beta\|_1.
\]
Let $$X = (I-A(A^TA)^{-1}A^T)B$$and $$y = (I-A(A^TA)^{-1}A^T)c.$$We
obtain the following simplified formulation:
\begin{equation}
\min_{\beta}:~f(\beta)={1\over 2}\|X\beta -y\|^2 +
\lambda\|Z\beta\|_1, \label{eqn_submain}
\end{equation}
where $X\in\mathbb{R}^{n\times r}$ and $Z\in \mathbb{R}^{m\times
r}$.

Denote the solution of Eq.~\eqref{eqn_org_ls} as $\hat{\theta}$ and
the ground truth as $\theta^*$. One can verify $\hat\theta =
V[\hat\alpha^T~\hat\beta^T]^T$. Define
$\alpha^*:=V_\alpha^T\theta^*$ and $\beta^*:=V_\beta^T\theta^*$.
Note that unlike $\hat\alpha$ and $\hat\beta$ the following usually
does not hold: $\alpha^* = -(A^TA)^{-1}A^T(B\beta^*-c)$. Let
$h=\hat\beta - \beta^*$ and $d=\hat\alpha-\alpha^*$. We will study
the upper bound of $\|\hat\theta-\theta^*\|$ in terms of $\|h\|$ and
$\|d\|$ based on the relationship $\|\hat\theta-\theta^*\| \leq
\|h\| + \|d\|$.


\section{Main Results}\label{sec:mainresults}
This section presents the main results in this paper. The estimate
error by Eq.~\eqref{eqn_org_ls}, or equivalently
Eq.~\eqref{eqn_submain}, is given in Theorem~\ref{thm_main}:
\begin{theorem} \label{thm_main}
Define
\begin{align*}
  W_{Xh,1}:=&\rho^-_{X,Z^+}(s+l),
  \\W_{Xh,2}:=&6\sigma_{\min}^{-1}(Z)\rho^+_{X,Z^+}(s+l)\sqrt{s/l},  \\
  W_{d,1}:=&{1\over
    2}\sigma^{-1}_{\min}(A^TA)(\bar{\rho}^+(p-r, s+l+p-r)-\bar{\rho}^-(p-r, s+l+p-r)), \\
  W_{d,2}:=&{3\over
    2}\sigma_{\min}^{-1}(A^TA)\sigma_{\min}^{-1}(Z)\sqrt{s/l}(\bar{\rho}^+(p-r,l+p-r)-\bar{\rho}^-(p-r, l+p-r)), \\
  W_{\sigma}:=&\frac{\sigma_{\max}(Z)\sigma_{\min}(Z)}{\sigma_{\min}(Z)-3\sqrt{s/l}\sigma_{\max}(Z)}, \\
  W_h:=&3\sqrt{s/l}\sigma^{-1}_{\min}(Z),
\end{align*}
where $\bar\rho^+(p-r,.)$ and $\bar\rho^-(p-r,.)$ denote
$\rho^+_{[A,B], Z^+}(p-r,.)$ and $\rho^-_{[A,B], Z^+}(p-r,.)$
respectively for short.  Taking $\lambda >
2\|(Z^+)^TX^T\epsilon\|_\infty$ in Eq.~\eqref{eqn_org_ls}, we have
if $A^TA$ is invertible (apparently, $n\geq p-r$ is required) and
there exists an integer $l > 9 \kappa^2 s$ such that $W_{Xh, 1} -
W_{Xh,2}W_\sigma>0$, then
\begin{equation}
\begin{aligned}
\|\hat\theta -\theta^*\|\leq W_\theta \sqrt{s} \lambda +
\|(A^TA)^{-1}A^T\epsilon\| \label{eqn_thm_main}
\end{aligned}
\end{equation}
where
\[
W_\theta =
6\frac{(1+W_{d,1})W_{\sigma}+(W_h+W_{d,2})W_{\sigma}^2}{W_{Xh,
1}-W_{Xh,2}W_{\sigma}}.\]
\end{theorem}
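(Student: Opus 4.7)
The plan is to work with the equivalent simplified problem~\eqref{eqn_submain} and bound $\|h\|=\|\hat\beta-\beta^*\|$ and $\|d\|=\|\hat\alpha-\alpha^*\|$ separately, then conclude via $\|\hat\theta-\theta^*\|\le\|h\|+\|d\|$. The starting point is the basic LASSO inequality: from $f(\hat\beta)\le f(\beta^*)$ together with the expansion $c=A\alpha^*+B\beta^*+\epsilon$ (so that $y-X\beta^*=(I-A(A^TA)^{-1}A^T)\epsilon$), I obtain
$$\tfrac{1}{2}\|Xh\|^2\le\langle Xh,\epsilon\rangle+\lambda\bigl(\|Z\beta^*\|_1-\|Z\hat\beta\|_1\bigr).$$
Rewriting $\langle Xh,\epsilon\rangle=\langle Zh,(Z^+)^TX^T\epsilon\rangle$ via $Z^+Z=I$ and applying H\"older together with the hypothesis $\lambda>2\|(Z^+)^TX^T\epsilon\|_\infty$ bounds this inner product by $\tfrac{\lambda}{2}\|Zh\|_1$. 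Splitting the $\ell_1$ term on $T_0=\mathrm{supp}(D\theta^*)$ and on $T_0^c$ then yields the compact inequality
$$\|Xh\|^2+\lambda\|(Zh)_{T_0^c}\|_1\le 3\lambda\|(Zh)_{T_0}\|_1,$$
which simultaneously produces the cone condition $\|(Zh)_{T_0^c}\|_1\le 3\|(Zh)_{T_0}\|_1$ and the upper bound $\|Xh\|^2\le 3\lambda\sqrt{s}\,\|(Zh)_{T_0}\|$.

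Next I would bound $\|h\|$ using the standard block-decomposition of D-RIP arguments. Partition $T_0^c$ into disjoint blocks $T_1,T_2,\ldots$ of size $l$ ordered by decreasing $|(Zh)_i|$ and set $h^{(j)}:=Z^+(Zh)_{T_j}$, so $h=h^{(01)}+r$ with $h^{(01)}:=h^{(0)}+h^{(1)}\in\mathcal{H}(Z^+,s+l)$, each $h^{(j)}\in\mathcal{H}(Z^+,l)\subset\mathcal{H}(Z^+,s+l)$, and by the cone condition $\|r\|\le\sum_{j\ge 2}\|h^{(j)}\|\le W_h\|(Zh)_{T_0}\|$. The RIP-type constants $\rho^\pm_{X,Z^+}(s+l)$ give $\|Xh^{(01)}\|^2\ge W_{Xh,1}\|h^{(01)}\|^2$ and $\|Xh^{(j)}\|^2\le\rho^+_{X,Z^+}(s+l)\|h^{(j)}\|^2$; expanding $\|Xh\|^2=\|Xh^{(01)}\|^2+2\langle Xh^{(01)},Xr\rangle+\|Xr\|^2$, applying Cauchy--Schwarz, and using $\sqrt{W_{Xh,1}\rho^+_{X,Z^+}(s+l)}\le\rho^+_{X,Z^+}(s+l)$ collapses the coefficients into $\|Xh\|^2\ge(W_{Xh,1}-W_{Xh,2}W_\sigma)\|h^{(01)}\|^2$, where $W_\sigma$ comes from the self-referential chain $\|(Zh)_{T_0}\|\le\sigma_{\max}(Z)\|h\|\le\sigma_{\max}(Z)(\|h^{(01)}\|+W_h\|(Zh)_{T_0}\|)$ and the hypothesis $l>9\kappa^2 s$ makes the denominator of $W_\sigma$ strictly positive. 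Pairing this RIP-type lower bound with the upper bound on $\|Xh\|^2$ and the conversion $\|h\|\le(1+W_hW_\sigma)\|h^{(01)}\|$ gives $\|h\|\lesssim\lambda\sqrt{s}$ with the correct constant.

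For $\|d\|$, the identity $\hat\alpha=-(A^TA)^{-1}A^T(B\hat\beta-c)$ combined with the expansion of $c$ gives $d=-(A^TA)^{-1}A^TBh+(A^TA)^{-1}A^T\epsilon$, whose second summand is precisely the extra $\|(A^TA)^{-1}A^T\epsilon\|$ in~\eqref{eqn_thm_main}. Pythagoras in the form $\|Bh\|^2=\|Xh\|^2+\|A(A^TA)^{-1}A^TBh\|^2$, together with $\|Au\|^2\ge\sigma_{\min}(A^TA)\|u\|^2$ for $u=(A^TA)^{-1}A^TBh$, reduces the first summand to controlling $\|Bh\|^2-\|Xh\|^2$, which is governed by the joint constants $\bar\rho^\pm(p-r,\cdot)$ applied to stacked vectors $[-u;h]\in\mathbb{R}^{p-r}\times\mathcal{H}(Z^+,\cdot)$; treating the leading block $h^{(01)}$ and each tail block separately with their $Z^+$-sparsities $s+l$ and $l$ respectively (relaxed to the arguments $s+l+p-r$ and $l+p-r$ appearing in $W_{d,1}$ and $W_{d,2}$ by the monotonicity of the $l_2$-argument) produces $\|(A^TA)^{-1}A^TBh\|\lesssim W_{d,1}\|h^{(01)}\|+W_{d,2}\|(Zh)_{T_0}\|$. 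Assembling via $\|\hat\theta-\theta^*\|\le\|h\|+\|d\|$ and chasing constants produces the claimed bound with $W_\theta$. The main obstacle will be the self-referential nature of the $\|h\|$ estimate: the upper bound $\|Xh\|^2\le 3\lambda\sqrt{s}\,\|(Zh)_{T_0}\|$ and the RIP lower bound on $\|Xh\|$ must be threaded in a way that absorbs the tail contributions without collapsing the denominator, which is exactly why the twin conditions $l>9\kappa^2 s$ and $W_{Xh,1}>W_{Xh,2}W_\sigma$ both appear; the $\|d\|$ piece is more bookkeeping than insight once the correct restricted subspaces for $[A,B]$ have been identified.
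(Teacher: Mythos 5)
Your overall architecture matches the paper's: the same block decomposition of $T_0^c$ into size-$l$ sets ordered by $|(Zh)_i|$, the same cone condition $\|Z_{T_0^c}h\|_1\le 3\|Z_{T_0}h\|_1$, the same tail estimate $\sum_{j\ge2}\|Z^+_{T_j}Z_{T_j}h\|\le W_h\|Z_{T_0}h\|$, the same self-referential chain producing $W_\sigma$ (this is exactly the paper's Lemma~\ref{lem_h}), and the same split $\|\hat\theta-\theta^*\|\le\|h\|+\|d\|$ with $d=-(A^TA)^{-1}A^T(Bh-\epsilon)$. One harmless deviation: you extract the upper bound $\|Xh\|^2\le 3\lambda\sqrt{s}\|Z_{T_0}h\|$ from the single basic inequality $f(\hat\beta)\le f(\beta^*)$, whereas the paper derives the cone condition from that inequality (discarding the quadratic term via a midpoint trick) and gets the upper bound $\|Xh\|^2\le6\sqrt{s}\lambda\|Z_{T_{01}}h\|$ separately from the KKT subgradient condition; your version is slightly sharper and feeds into the same chain, so this is fine.

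The one place where your sketch does not deliver the theorem as stated is the $\|d\|$ bound. Your route — Pythagoras $\|Bh\|^2=\|Xh\|^2+\|Au\|^2$ with $u=(A^TA)^{-1}A^TBh$, then a single stacked vector $[-u;h]$ fed into $\bar\rho^\pm$ — controls $\|Au\|^2$ by $(\bar\rho^+-\bar\rho^-)\|h^{(01)}\|^2$ (per block), hence yields $\|u\|\le\bigl(\sigma_{\min}^{-1}(A^TA)(\bar\rho^+-\bar\rho^-)\bigr)^{1/2}\|h^{(01)}\|$. That is a \emph{square-root} of the quantity appearing in the theorem's $W_{d,1}=\tfrac12\sigma_{\min}^{-1}(A^TA)(\bar\rho^+-\bar\rho^-)$, and the two are not comparable in general, so you prove a bound with differently shaped constants rather than the stated one. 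The paper instead bounds $\|A^TBv\|/\|v\|$ for $v\in\mathcal{H}(Z^+,l_3)$ directly via the polarization identity $u^TA^TBv=\tfrac14\bigl(\|[A,B][u;v]\|^2-\|[A,B][u;-v]\|^2\bigr)$, which gives $\max_v\|A^TBv\|/\|v\|\le\tfrac12\bigl(\bar\rho^+(p-r,l_3)-\bar\rho^-(p-r,l_3)\bigr)$ and then $\|u\|\le\sigma_{\min}^{-1}(A^TA)\|A^TBh\|$; applied to the block $T_{01}$ and to the tail blocks (with the factor $3\sqrt{s/l}$ from the cone condition) this produces $W_{d,1}$ and $W_{d,2}$ exactly. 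You would need to replace your Pythagoras step by this polarization argument (or prove the analogous cross-term lemma) to land on the stated constants; everything else in your plan goes through.
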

One can see from the proof that the first term
of~\eqref{eqn_thm_main} is mainly due to the estimate error of the
sparse part $\beta$ and the second term is due to the estimate error
of the free part $\alpha$.

The upper bound in Eq.~\eqref{eqn_thm_main} strongly depends on
parameters about $X$ and $Z^+$ such as $\rho^+_{X,Z^+}(\cdot)$,
$\rho^-_{X,Z^+}(\cdot)$, $\bar{\rho}^+(\cdot,\cdot)$, and
$\bar\rho^-(\cdot,\cdot)$. Although for a given $\Phi$ and $D$, $X$
and $Z^+$ are fixed, it is still challenging to evaluate these
parameters. Similar to existing literature like \citet{CandesT05},
we assume $\Phi$ to be a Gaussian random matrix and estimate the
values of these parameters in Theorem~\ref{thm_rhobound}.

\begin{theorem} \label{thm_rhobound} Assume that $\Phi$ is a Gaussian
  random matrix. The following holds with probability at least
  $1-2\exp\{-\Omega(k\log(em/k))\}$:
\begin{equation}
   \sqrt{\rho^+_{X,Z^+}(k)}\leq \sqrt{n+r-p}+\Omega\left(\sqrt{k\log(em/k)}\right)\label{eqn_rho+}
   \end{equation}
   \begin{equation}
   \sqrt{\rho^-_{X,Z^+}(k)}\geq \sqrt{n+r-p} -
   \Omega\left(\sqrt{k\log(em/k)}\right)\label{eqn_rho-}
\end{equation}
\begin{equation}
\begin{aligned}
   &\sqrt{\rho^+_{[A,B],Z^+}(p-r, k)}\leq \sqrt{n}+\Omega(\sqrt{k+p-r}) + \Omega(\sqrt{k\log(em/k)}) \label{eqn_barrho+}
\end{aligned}
\end{equation}
\begin{equation}
\begin{aligned}
  &\sqrt{\rho^-_{[A,B],Z^+}(p-r, k)}\geq \sqrt{n} - \Omega(\sqrt{k+p-r}) - \Omega(\sqrt{k\log(em/k)}). \label{eqn_barrho-}
\end{aligned}
\end{equation}
\end{theorem}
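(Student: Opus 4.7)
The plan is to combine three ingredients: rotational invariance of the Gaussian distribution, sub-Gaussian deviation for $\|Gh\|$ at a fixed unit direction $h$, and an $\varepsilon$-net covering of the constraint sets $\mathcal{H}(Z^+,k)\cap S^{r-1}$ and $(\mathbb{R}^{p-r}\times \mathcal{H}(Z^+,k))\cap S^{p-1}$.

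First, I would exploit that $V=[V_\alpha\;V_\beta]$ is unitary and $\Phi$ has i.i.d.\ $\mathcal{N}(0,1)$ entries, so $[A,B]=\Phi V$ is again an $n\times p$ Gaussian matrix with i.i.d.\ standard normal entries, and in particular $A$ and $B$ are independent. Conditioning on $A$, the projector $I-A(A^TA)^{-1}A^T$ equals $U_\perp U_\perp^T$ for some $U_\perp\in\mathbb{R}^{n\times(n-p+r)}$ with orthonormal columns (an orthonormal basis of the orthogonal complement of the column span of $A$), and the independence of $A$ and $B$ gives that $G:=U_\perp^T B$ is an $(n-p+r)\times r$ i.i.d.\ Gaussian matrix. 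Since $\|Xh\|=\|U_\perp U_\perp^T Bh\|=\|Gh\|$ pointwise, the first pair of inequalities reduces to a deviation statement for the singular values of $G$ restricted to $\mathcal{H}(Z^+,k)\cap S^{r-1}$, and the second pair to the analogous statement for $[A,B]$ on $(\mathbb{R}^{p-r}\times\mathcal{H}(Z^+,k))\cap S^{p-1}$.

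At a fixed unit direction $h$, standard Gaussian concentration for $1$-Lipschitz functionals gives $\bigl|\|Gh\|-\sqrt{n-p+r}\bigr|\leq t$ with probability at least $1-2e^{-t^2/2}$, and likewise $\bigl|\|[A,B]h\|-\sqrt{n}\bigr|\leq t$. To upgrade these to a uniform bound I would cover the constraint sets. The key geometric fact is that for each fixed support $T\subseteq\{1,\dots,m\}$ with $|T|=k$, the set $\{Z^+v:\mathrm{supp}(v)\subseteq T\}$ is a linear subspace of $\mathbb{R}^r$ of dimension at most $k$, so its intersection with $S^{r-1}$ admits a standard $\varepsilon$-net of size $(1+2/\varepsilon)^k$. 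Taking a union over the $\binom{m}{k}\leq (em/k)^k$ choices of $T$ produces an $\varepsilon$-net of $\mathcal{H}(Z^+,k)\cap S^{r-1}$ of cardinality at most $(em/k)^k(1+2/\varepsilon)^k$. For the second pair, I would take the product with a $(1+2/\varepsilon)^{p-r}$-sized net of the unit ball of the free block $\mathbb{R}^{p-r}$.

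Finally I would apply the fixed-direction concentration at each net point, union bound over the net, and pass from the net back to the full set via the usual post-processing argument (the extremal singular value on an $\varepsilon$-net is off by at most a factor of $(1-\varepsilon)^{-1}$). Fixing $\varepsilon$ to be a small absolute constant absorbs this slack into constants, and choosing $t$ proportional to $\sqrt{k\log(em/k)}$ for the first pair, and to $\sqrt{p-r+k}+\sqrt{k\log(em/k)}$ for the second, beats the logarithm of the net size and leaves the required failure probability $2\exp(-\Omega(k\log(em/k)))$. The main obstacle I anticipate is the covering step itself: because $\mathcal{H}(Z^+,k)$ is a union of subspaces rather than the familiar set of sparse coordinate vectors, the net has to be built subspace by subspace and it takes some care to verify that the subspace dimension is bounded by $k$ uniformly in $T$, so that the combinatorial $(em/k)^k$ factor and the geometric $(1+2/\varepsilon)^k$ factor combine cleanly and no residual dependence on the conditioning of the submatrices of $Z^+$ slips in; the remaining Gaussian manipulations are standard.
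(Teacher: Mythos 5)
Your proposal is correct and follows essentially the same route as the paper: reduce $X=(I-A(A^TA)^{-1}A^T)B$ to a Gaussian matrix acting on the fixed $(n+r-p)$-dimensional complement of the column span of $A$ (using independence of $A=\Phi V_\alpha$ and $B=\Phi V_\beta$), observe that for each support $T$ the set $\{Z^+v:\mathrm{supp}(v)\subseteq T\}$ is a subspace of dimension at most $k$, control the extreme singular values on that subspace, and union bound over the $\binom{m}{k}\leq (em/k)^k$ supports with $t=\Omega(\sqrt{k\log(em/k)})$. The only cosmetic difference is that you unroll the per-subspace $\varepsilon$-net and Lipschitz-concentration argument by hand, whereas the paper passes to an orthonormal basis $Q$ of each subspace and cites Vershynin's singular-value bound (Theorem 5.39) for the resulting $(n+r-p)\times k$ Gaussian matrix, which is the same argument packaged as a lemma.
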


Now we are ready to analyze the estimate error given in
Eq.~\eqref{eqn_thm_main}. Two cases are considered in the following:
the noiseless case $\epsilon=0$ and the noisy case $\epsilon\neq 0$.

\subsection{Noiseless Case $\epsilon=0$}
First let us consider the noiseless case. {\rcb Since $\epsilon =
0$, the second term in Eq.~\eqref{eqn_thm_main} vanishes. We can
choose a value of $\lambda$ to make the first term in
Eq.~\eqref{eqn_thm_main} arbitrarily small. Hence the true signal
$\theta^*$ can be recovered with an arbitrary precision as long as
$W_\theta>0$, which is equivalent to requiring
$W_{Xh,1}-W_{Xh,2}W_{\sigma} > 0$. Actually, when $\lambda$ is
extremely small, Eq.~\eqref{eqn_org_ls} approximately solves the
problem in Eq.~\eqref{eqn_sc_noiseless} with $\varepsilon=0$.}

Intuitively, the larger the measurement number $n$ is, the easier
the true signal $\theta^*$ can be recovered, since more measurements
give a feasible subspace with a lower dimension. In order to
estimate how many measurements are required, we consider the
measurement matrix $\Phi$ to be a Gaussian random matrix (This is
also a standard setup in compressive sensing.). Since this paper
mainly focuses on the large scale case, one can treat the value of
$l$ as a number proportional to $\kappa^2s$.

{\rcb Using Eq.~\eqref{eqn_rho+} and Eq.~\eqref{eqn_rho-}, we can
estimate the lower bound of $W_{Xh, 1} - W_{Xh,2} W_{\sigma}$ in
Lemma~\ref{lem_noiseless0}.
\begin{lemma} \label{lem_noiseless0}
Assume $\Phi$ to be a Gaussian random matrix. Let $l=\lceil(10\kappa)^2s\rceil$.
With probability at least $1-2\exp\{-\Omega((s+l)\log(em/(s+l)))\}$,
we have
\begin{equation}
 \begin{aligned}
&W_{Xh, 1} - W_{Xh,2} W_{\sigma} \geq {1\over 7}(n+r-p) - \Omega
  \left(\sqrt{(n+r-p)(s+l)\log\left({em\over s+l}\right)}\right).
\end{aligned} \label{eqn_lem_gauassian}
\end{equation}
\end{lemma}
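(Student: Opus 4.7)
The plan is to observe that the choice $l = \lceil (10\kappa)^2 s\rceil$ makes the inner quantities in $W_\sigma$ and $W_{Xh,2}$ collapse to simple constant multiples of the $\rho^\pm_{X,Z^+}(s+l)$, and then to plug in the concentration bounds from Theorem~\ref{thm_rhobound} with $k=s+l$.

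First I would do the deterministic reduction. Since $l \geq (10\kappa)^2 s$, we have $\sqrt{s/l}\leq 1/(10\kappa)$, so $3\sqrt{s/l}\,\sigma_{\max}(Z) \leq \tfrac{3}{10}\sigma_{\min}(Z)$. Substituting into the definition of $W_\sigma$ gives the bound
\[
W_\sigma \;\leq\; \frac{\sigma_{\max}(Z)\sigma_{\min}(Z)}{(1-3/10)\sigma_{\min}(Z)} \;=\; \tfrac{10}{7}\sigma_{\max}(Z).
\]
Then using $\sigma_{\max}(Z)/\sigma_{\min}(Z)=\kappa$ and again $\sqrt{s/l}\leq 1/(10\kappa)$,
\[
W_{Xh,2}\, W_\sigma \;\leq\; 6\,\sigma_{\min}^{-1}(Z)\,\rho^+_{X,Z^+}(s+l)\,\tfrac{1}{10\kappa}\cdot\tfrac{10}{7}\sigma_{\max}(Z) \;=\; \tfrac{6}{7}\,\rho^+_{X,Z^+}(s+l).
\]
Hence
\[
W_{Xh,1} - W_{Xh,2}W_\sigma \;\geq\; \rho^-_{X,Z^+}(s+l) - \tfrac{6}{7}\,\rho^+_{X,Z^+}(s+l).
\]

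Next I would invoke Theorem~\ref{thm_rhobound} with $k=s+l$. Squaring the two given square-root bounds (and using that $\sqrt{n+r-p}-\Omega(\sqrt{k\log(em/k)})$ is nonnegative under the implicit assumption $n+r-p \gtrsim (s+l)\log(em/(s+l))$) produces
\[
\rho^+_{X,Z^+}(s+l) \;\leq\; (n+r-p) + \Omega\!\left(\sqrt{(n+r-p)(s+l)\log\tfrac{em}{s+l}}\right) + \Omega\!\left((s+l)\log\tfrac{em}{s+l}\right),
\]
and an analogous two-sided lower bound for $\rho^-_{X,Z^+}(s+l)$, each holding with probability at least $1-2\exp\{-\Omega((s+l)\log(em/(s+l)))\}$. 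Taking a union bound and subtracting $\tfrac{6}{7}$ times the upper bound from the lower bound, the leading $(n+r-p)$ terms combine into $\tfrac{1}{7}(n+r-p)$, and all remaining error terms are absorbed into $\Omega(\sqrt{(n+r-p)(s+l)\log(em/(s+l))})$ (since the $(s+l)\log(em/(s+l))$ terms are dominated by the cross term whenever $(s+l)\log(em/(s+l))\lesssim n+r-p$, which is the regime of interest).

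I do not foresee a genuine obstacle: both pieces are essentially algebraic substitution followed by a direct application of Theorem~\ref{thm_rhobound}. The one point requiring a little care is the squaring step for $\rho^-_{X,Z^+}$, which is only meaningful once the relevant square root is nonnegative; in the regime where the final bound is vacuous this step is harmless, and in the nontrivial regime it is legitimate. The rest is bookkeeping of the $\Omega$ terms.
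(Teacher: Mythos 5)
Your proof is correct and follows essentially the same route as the paper: the deterministic step $l\geq(10\kappa)^2 s$ forces $W_{Xh,2}W_\sigma\leq\tfrac{6}{7}\rho^+_{X,Z^+}(s+l)$, and then Theorem~\ref{thm_rhobound} with $k=s+l$ is squared and substituted to produce the $\tfrac{1}{7}(n+r-p)$ leading term. You are in fact slightly more careful than the paper, which silently drops the $\Omega((s+l)\log(em/(s+l)))$ term arising from squaring and does not remark on the nonnegativity needed for the $\rho^-$ bound; your observation that these are harmless in the only regime where the conclusion is non-vacuous is the right justification.
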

From Lemma~\ref{lem_noiseless0}, to recover the true signal, we only
need
\begin{align}
 (n+r-p) > \Omega ((s+l)\log(em/(s+l))).
\label{eqn_noiseless_measure}
\end{align}
To simplify the
discussion, we propose several minor conditions first in
Assumption~\ref{ass_1}.
\begin{assumption} \label{ass_1}
Assume that
 \begin{itemize}
  \item $p-r\leq \phi n~(\phi<1)$ in the noiseless case and $p-r\leq \Omega(s)$ in the noisy case \footnote{This assumption indicates that the free dimension of the true signal $\theta^*$ (or the dimension of the free part $\alpha\in\mathbb{R}^{p-r}$) should not be too large. Intuitively, one needs more measurements to recover the free part because it has no sparse constraint and much fewer measurements to recover the sparse part. Thus, if only limited measurements are available, we have to restrict the dimension of the free part.};
  \item the condition number $\kappa={\sigma_{\max}(D)\over
      \sigma_{\min}(D)} = {\sigma_{\max}(Z)\over \sigma_{\min}(Z)} $
    is bounded;
  \item $m=\Omega(p^i)$ where $i>0$, that is, $m$ can be a polynomial function in terms of $p$.
\end{itemize}
\end{assumption}
One can verify that under Assumption~\ref{ass_1}, taking
$l=\lceil(10\kappa)^2s\rceil=\Omega(s)$, the right hand side
of~\eqref{eqn_lem_gauassian} is greater than
$$\Omega(n)-\Omega(\sqrt{ns\log (em/s)})=\Omega(n)-\Omega(\sqrt{ns\log (ep/s)}).$$Letting
$n\geq\Omega(s\log (ep/s))$ [or $\Omega(s\log (em/s))$ if without
assuming $m=\Omega(p^i)$], one can have that
$$W_{Xh, 1} - W_{Xh,2} W_{\sigma}\geq \Omega(n)-\Omega(\sqrt{ns\log (ep/s)}) > 0$$holds with high
probability (since the probability in Lemma~\ref{lem_noiseless0}
converges to 1 while $p$ goes to infinity).
In other words, in the noiseless case the true signal can be
recovered at an arbitrary precision with high probability.

To compare with existing results, we consider two special cases:
$D=\bold{I}_{p\times p}$ \citep{CandesT05} and $D$ has orthogonal
columns \citep{CandesENR10}, that is, $D^TD=I$. When
$D=\bold{I}_{p\times p}$ and $\Phi$ is a Gaussian random matrix, the
required measurements in \citet{CandesT05} are $\Omega(s\log(ep/s))$, which is the same as ours. Also note that if $D=\bold{I}_{p\times
p}$, Assumption~\ref{ass_1} is satisfied automatically. Thus our
result does not enforce any additional condition and is consistent
with existing analysis for the special case $D=\bold{I}_{p\times
p}$. Next we consider the case when $D$ has orthogonal columns as in
\citet{CandesENR10}. In this situation, all conditions except
$m=\Omega(p^i)$ in Assumption~\ref{ass_1} are satisfied. One can
easily verify that the required measurements to recover the true
signal are $\Omega(s\log (em/s))$ without assuming $m=\Omega(p^i)$
from our analysis above, which is consistent with the result in
\citet{CandesENR10}.
}

{\rcc
In addition, from Eq.~\eqref{eqn_noiseless_measure}, one can see that the boundedness requirement for $\kappa$ can be removed as long as we choose the measurements number as $n=\Omega(\kappa^2s\log (ep/s))$.
}

{\rcb
\subsection{Noisy Case $\epsilon\neq 0$} \label{sec:mainresult_noisy}

Next we consider the noisy case, that is, {\rcb study the upper
bound in~\eqref{eqn_thm_main} while $\epsilon\neq 0$. Similarly, we
mainly focus on the large scale case and assume Gaussian ensembles
for the measurement matrix $\Phi$. Theorem~\ref{thm_gaussian}
provides the upper bound of the estimate error under the conditions
in Assumption~\ref{ass_1}.}

\begin{theorem} \label{thm_gaussian}
Assume that the measurement matrix $\Phi$ is a Gaussian random
matrix, the measurement satisfies $n=O(s\log p)$, and
Assumption~\ref{ass_1} holds. Taking
$\lambda=C\|(Z^+)^TX^T\epsilon\|_\infty$ with $C>2$ in
Eq.~\eqref{eqn_org_ls}, we have
\begin{equation}
\|\hat\theta-\theta^*\|\leq \Omega\left(\sqrt{s\log p\over
        {n}}\right), \label{eqn_thm_gaussian}
\end{equation}
with probability at least
$1-\Omega(p^{-1})-\Omega(m^{-1})-\exp\{\Omega(-s\log (ep/s))\}$.
\end{theorem}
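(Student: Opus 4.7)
The plan is to apply Theorem \ref{thm_main} with $l=\lceil (10\kappa)^2 s\rceil$ and bound the two pieces of \eqref{eqn_thm_main} separately. With this choice of $l$, Assumption \ref{ass_1} gives $l=\Omega(s)$ and $\sqrt{s/l}<1/(3\kappa)$, so that $W_\sigma$ and $W_h$ are of order one. The task thus reduces to (i) controlling the denominator $W_{Xh,1}-W_{Xh,2}W_\sigma$, (ii) controlling the numerator via $W_{d,1},W_{d,2}$, and (iii) handling the two noise-dependent quantities $\lambda=C\|(Z^+)^TX^T\epsilon\|_\infty$ and $\|(A^TA)^{-1}A^T\epsilon\|$.

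For (i) I would invoke Lemma \ref{lem_noiseless0} directly, noting its proof never uses $\epsilon=0$: under Assumption \ref{ass_1} with $n=O(s\log p)$ its right-hand side is $\Omega(n)-\Omega(\sqrt{ns\log(ep/s)})=\Omega(n)$. For (ii) I would substitute the bounds of Theorem \ref{thm_rhobound} into $W_{d,1},W_{d,2}$. Since $p-r=O(s)$ and $k=s+l+p-r=O(s)$, both $k+p-r$ and $k\log(em/k)$ are $O(s\log p)$, while a standard Gaussian lower-tail on $A\in\mathbb{R}^{n\times(p-r)}$ yields $\sigma_{\min}(A^TA)\geq\Omega(n)$. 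Using $\bar\rho^+-\bar\rho^-\leq (\sqrt{\bar\rho^+}+\sqrt{\bar\rho^-})\,|\sqrt{\bar\rho^+}-\sqrt{\bar\rho^-}|=O(\sqrt{n})\cdot O(\sqrt{s\log p})$, this gives $W_{d,1},W_{d,2}=O(\sqrt{s\log p/n})=O(1)$. Hence $W_\theta=\Theta(1/n)$ on the high-probability event.

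For the first noise quantity, conditionally on $\Phi$ the vector $(Z^+)^TX^T\epsilon\in\mathbb{R}^m$ is sub-Gaussian coordinatewise, with the $j$-th coordinate having sub-Gaussian norm at most $\|XZ^+e_j\|\Delta\leq\sqrt{\rho^+_{X,Z^+}(1)}\,\Delta=O(\sqrt{n})\Delta$ by Theorem \ref{thm_rhobound} at $k=1$. A union bound over the $m$ coordinates yields $\|(Z^+)^TX^T\epsilon\|_\infty\leq O(\sqrt{n\log m})$ with probability at least $1-\Omega(m^{-1})$, so, using $m=\mathrm{poly}(p)$,
\[W_\theta\sqrt{s}\,\lambda=O(1/n)\cdot\sqrt{s}\cdot O(\sqrt{n\log m})=O(\sqrt{s\log p/n}).\]
For the residual, $A=\Phi V_\alpha$ is $n\times(p-r)$ Gaussian with $p-r=O(s)\ll n$, so $\sigma_{\min}(A)\geq\Omega(\sqrt{n})$ and $\|P_A\epsilon\|^2=O(p-r)=O(s)$ by a Hanson--Wright/$\chi^2$ tail applied to the projection of a sub-Gaussian vector onto a $(p-r)$-dimensional subspace independent of $\epsilon$; this gives $\|(A^TA)^{-1}A^T\epsilon\|\leq O(\sqrt{s/n})$. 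Adding the two contributions gives \eqref{eqn_thm_gaussian}.

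The hard part will be in step (iii), specifically the uniform control over all $m$ rows of $(Z^+)^TX^T$ via the $k=1$ case of Theorem \ref{thm_rhobound}: it is this union bound that produces the $\Omega(m^{-1})$ term in the failure probability, and one must be careful to pool it with the $\exp\{-\Omega(s\log(ep/s))\}$ event from Lemma \ref{lem_noiseless0} and the independent Gaussian-noise concentration event contributing the $\Omega(p^{-1})$ term, so that the stated overall probability $1-\Omega(p^{-1})-\Omega(m^{-1})-\exp\{-\Omega(s\log(ep/s))\}$ is recovered via a single union bound on the complement.
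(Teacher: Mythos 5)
Your proposal is correct and follows essentially the same route as the paper's proof: apply Theorem~\ref{thm_main} with $l=\lceil(10\kappa)^2s\rceil$, get $W_{Xh,1}-W_{Xh,2}W_\sigma\geq\Omega(n)$ from Lemma~\ref{lem_noiseless0}, bound $W_{d,1},W_{d,2}$ via Theorem~\ref{thm_rhobound} and $\sigma_{\min}(A^TA)\geq\Omega(n)$, and control $\lambda$ and $\|(A^TA)^{-1}A^T\epsilon\|$ by sub-Gaussian concentration with a union bound over the $m$ coordinates (the paper packages these last steps as Lemmas~\ref{lem_Qepsilon2}, \ref{lem_Qepsilon}, \ref{lem_lambdabound}, and \ref{lem_Ainverse}, bounding $\|XZ^+\|_{\infty,2}$ by a per-column Gaussian concentration rather than your $k=1$ instance of Theorem~\ref{thm_rhobound}, which is equivalent). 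The only imprecision is treating $W_\sigma$, $W_h$, and the sub-Gaussian norm $\|XZ^+e_j\|\Delta$ as free of the scale of $Z$ when only the ratio $\kappa$ is assumed bounded; this is harmless since the omitted $\sigma_{\max}(Z)$ and $\sigma_{\min}^{-1}(Z)$ factors combine into powers of $\kappa$ exactly as in the paper's final display.
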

One can verify that when $p$ goes to infinity, the upper bound
in~\eqref{eqn_thm_gaussian} converges to $0$ from $n=O(s\log p)$ and
the probability converges to $1$ due to $m=\Omega(p^i)$. It means
that the estimate error converges to $0$ asymptotically given the
measures $n=O(s\log p)$.


This result shows the consistency property, that is, if the
measurement number $n$ grows faster than $s\log (p)$, the estimate
error will vanish. This consistency property is consistent with the
special case LASSO by taking $D=\bold{I}_{p\times p}$
\citep{zhang09a}. \citet{CandesENR10} considered Eq.~\eqref{eqn_sc}
and obtained an upper bound for the estimate error
$\Omega(\varepsilon/\sqrt{n})$ which does not guarantee the
consistency property like ours since
$\varepsilon=\Omega(\|\epsilon\|)=\Omega(\sqrt{n})$. Their result
only guarantees that the estimation error bound converges to a
constant given $n=O(s\log p)$.

{\rca In addition, from the derivation of
Eq.~\eqref{eqn_thm_gaussian}, one can simply verify that the
boundedness requirement for $\kappa$ can actually be removed, if we
allow more observations, for example, $ n=O(\kappa^4 s\log p)$. Here
we enforce the boundedness condition just for simplification of
analysis and a convenient comparison to the standard LASSO (it needs
$n=O(s\log p)$ measurements). } }

\section{The Condition Number of $D$} \label{sec:conditionnumber}
Since $\kappa$ is a key factor from the derivation of
Eq.~\eqref{eqn_thm_gaussian}, we consider the fused LASSO and the
random graphs and estimate the values of $\kappa$ in these two
cases.

\begin{figure*}[t!]
  \centering
  \includegraphics[width=0.3\textwidth]{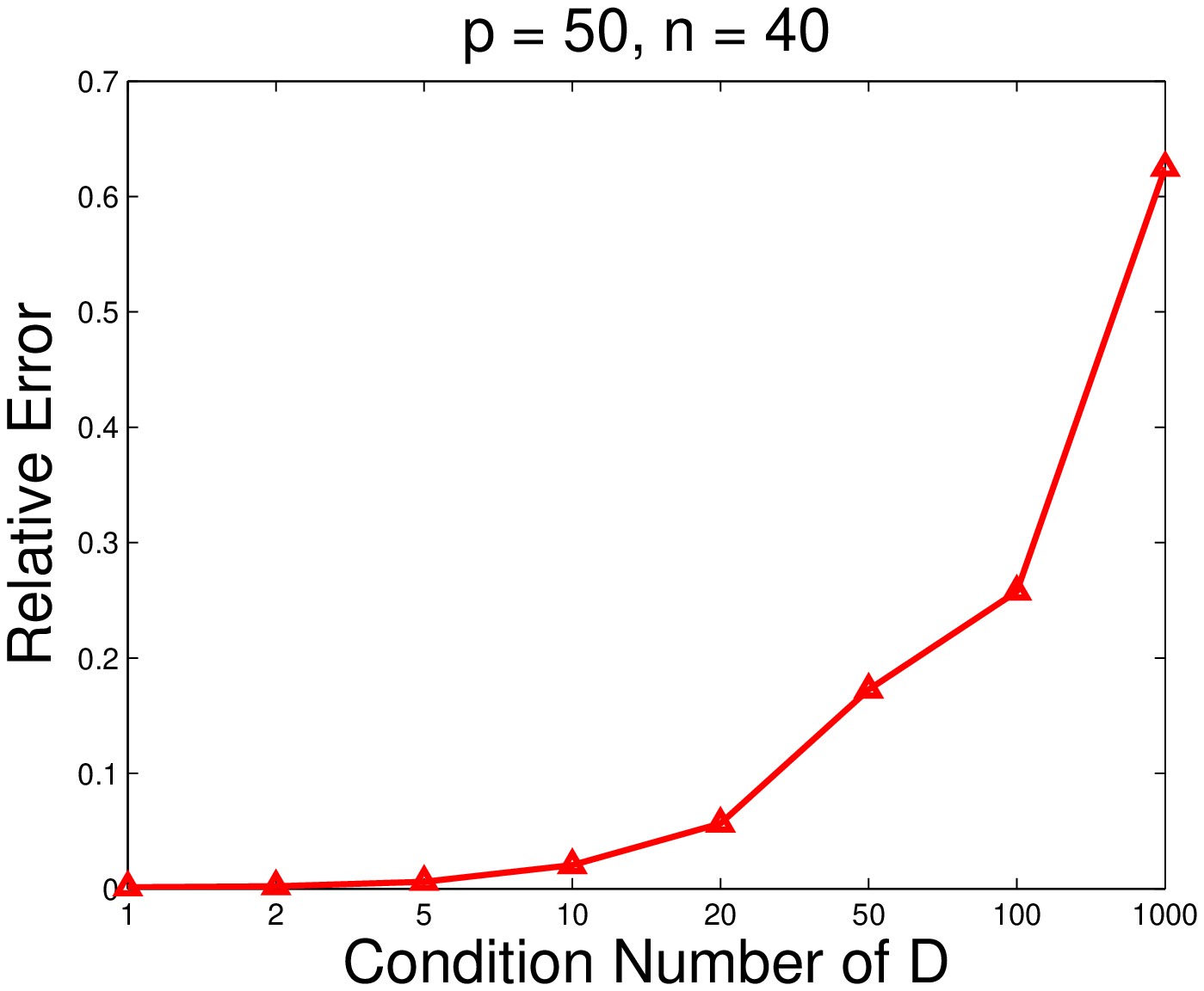}
  \includegraphics[width=0.3\textwidth]{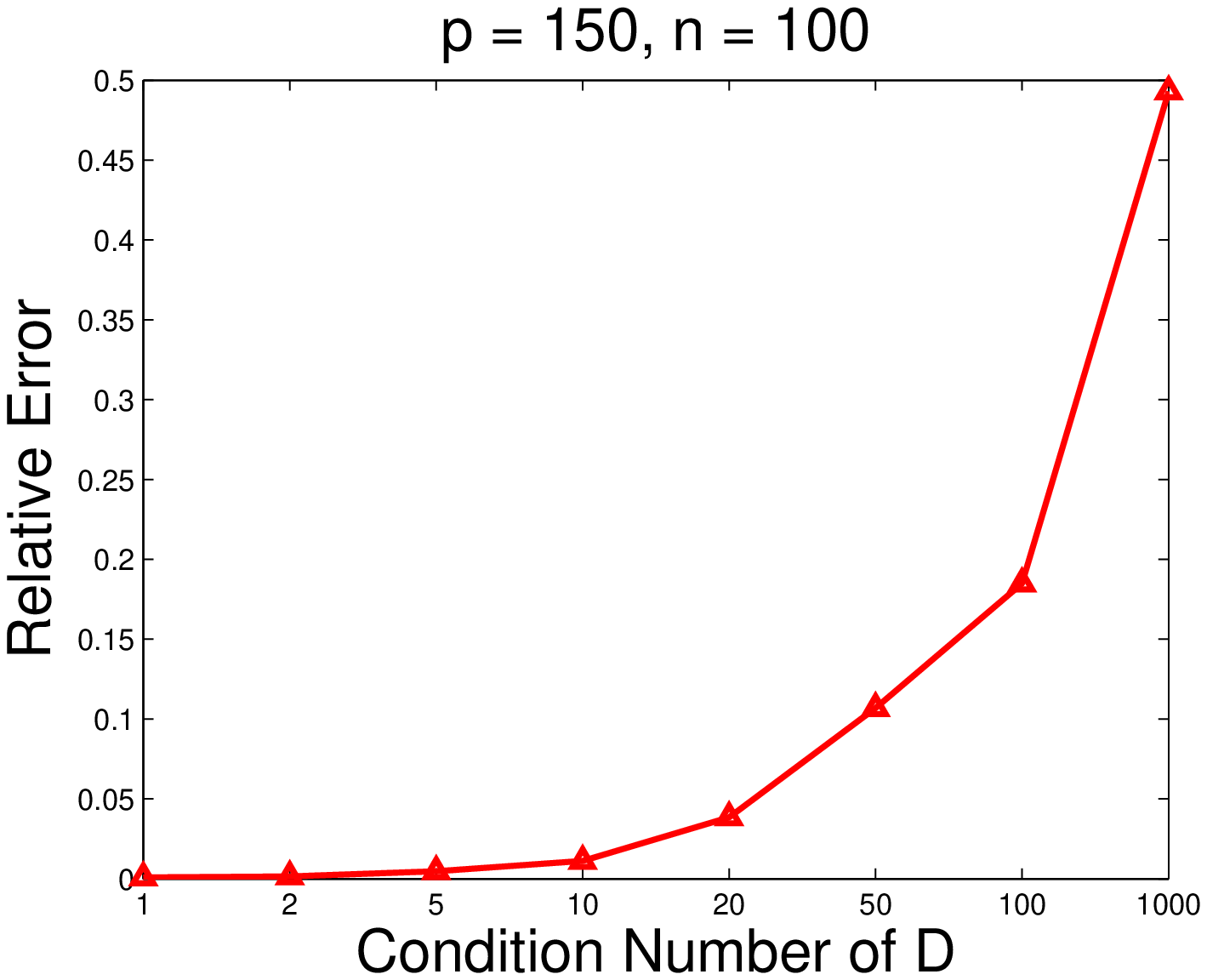}
  \includegraphics[width=0.3\textwidth]{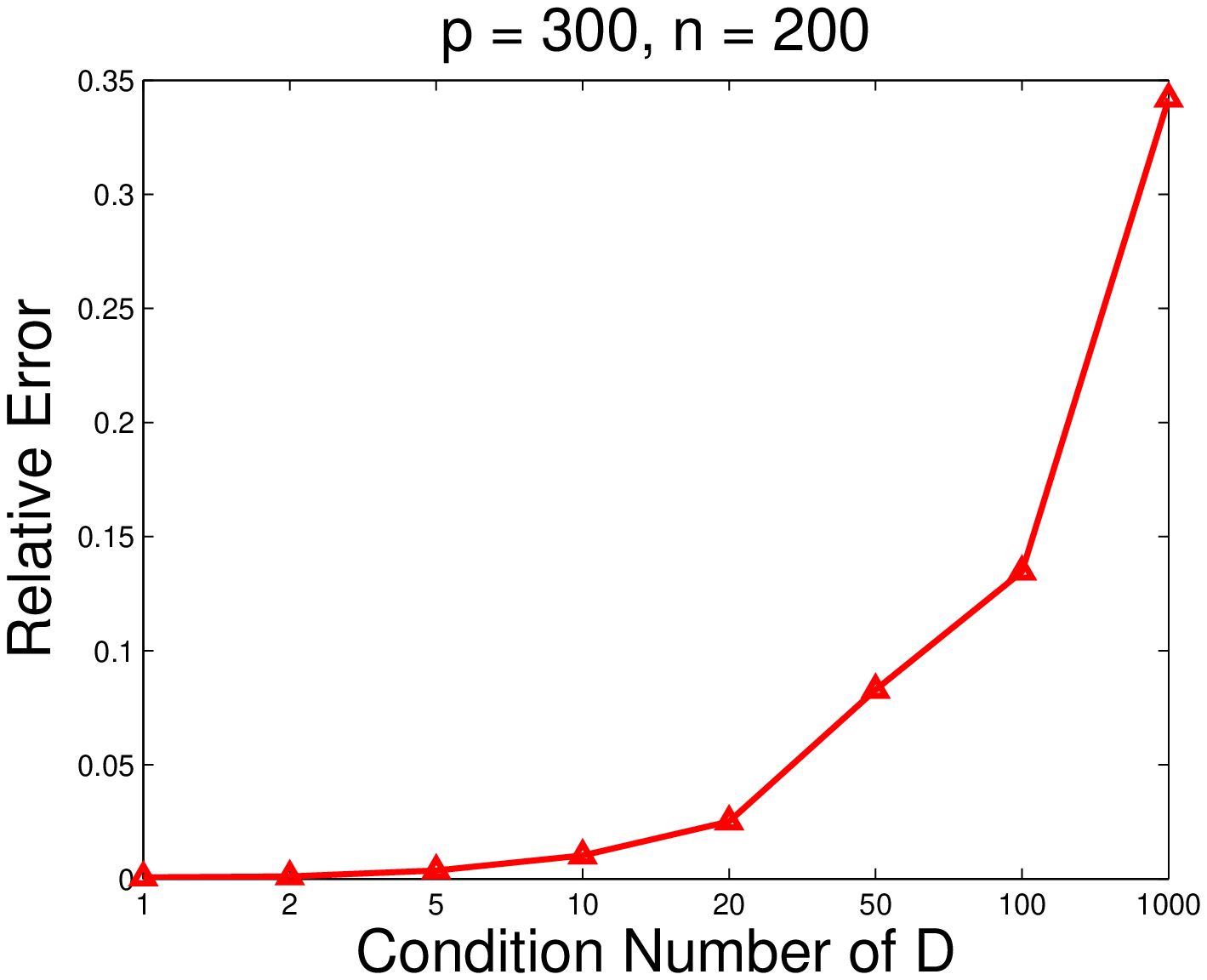}
  \caption{Illustration of the relationship between condition number
    and performance in terms of relative error. Three problem sizes
    are used as examples.}
  \label{fig:perf-cond}
\end{figure*}

Let us consider the fused LASSO first. The transformation matrix $D$
is
$$\left[\begin{array}{c}
\left[\begin{array}{cc} \bold{I}_{(p-1)\times (p-1)}&\bold{0}_{p-1}
\end{array}\right]-\left[\begin{array}{cc} \bold{0}_{p-1}& \bold{I}_{(p-1)\times (p-1)}
\end{array}\right]\\
\bold{I}_{p\times p}\end{array} \right].$$ One can verify that
$$\sigma_{\min}(D) = \min_{\|v\|=1}\|Dv\|\geq  \min_{\|v\|=1} \|v\| =
1$$and
\begin{equation*}
\begin{aligned}
&\sigma_{\max}(D) = \max_{\|v\|=1}\|Dv\|\\
\leq& \max_{\|v\|=1}\|\left[\begin{array}{cc} \bold{I}_{(p-1)\times
(p-1)}&\bold{0}_{p-1}
\end{array}\right]v-\left[\begin{array}{cc} \bold{0}_{p-1}& \bold{I}_{(p-1)\times
(p-1)}
\end{array}\right]v\| + \|v\|\\
\leq & \max_{\|v\|=1}\|\left[\begin{array}{cc} \bold{I}_{(p-1)\times
(p-1)}&\bold{0}_{p-1}
\end{array}\right]\|\|v\|+\|\left[\begin{array}{cc} \bold{0}_{p-1}& \bold{I}_{(p-1)\times
(p-1)}
\end{array}\right]\|\|v\| + \|v\|\\
\leq &3
\end{aligned}
\end{equation*}
which implies that $\sigma_{\min}(D)\geq 1$ and $\sigma_{\max}(D)
\leq 3$. Hence we have $\kappa\leq 3$ in the fused LASSO case.

Next we consider therandom graph. The transformation matrix $D$
corresponding to a random graph is generated in the following way:
(1) each row is independent of the others;
(2) two entries of each row are uniformly selected and are set to
$1$ and $-1$ respectively;
(3) the remaining entries are set to $0$.
The following result shows that the condition number of $D$ is
bounded with high probability.
\begin{theorem}\label{thm_kappa}
  For any $m$ and $p$ satisfying that $m\geq cp$ where $c$ is large
  enough, the  following holds:
$${\sigma_{\max}(D)\over \sigma_{\min}(D)} \leq {\sqrt{m}+\Omega(\sqrt{p})\over \sqrt{m}-\Omega(\sqrt{p})},$$with probability at least $1-2\exp\{-\Omega(p)\}$.
\end{theorem}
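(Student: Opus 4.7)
The plan is to reduce to the restriction of $D$ to $\mathbf{1}^\perp$ (which carries all the nonzero singular values), rescale so the rows become isotropic, and then apply a standard sample-covariance concentration bound for bounded isotropic rows.

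First, observe that every row of $D$ has the form $e_i-e_j$, so $D\mathbf{1}=0$ and hence $\operatorname{rank}(D)\le p-1$. The nonzero singular values of $D$ therefore equal the singular values of $\tilde D := DV\in\mathbb{R}^{m\times(p-1)}$, where $V\in\mathbb{R}^{p\times(p-1)}$ has orthonormal columns spanning $\mathbf{1}^\perp$. A short computation on the uniform pair distribution shows that each row $\tilde X_k = V^{\!\top}(e_{i_k}-e_{j_k})$ satisfies
\[
\mathbb{E}[\tilde X_k\tilde X_k^{\!\top}] \;=\; V^{\!\top}\!\Big(\tfrac{2}{p-1}\!\big(I_p-\tfrac{1}{p}\mathbf{1}\mathbf{1}^{\!\top}\big)\Big)V \;=\; \tfrac{2}{p-1}\,I_{p-1},
\qquad \|\tilde X_k\|_2=\sqrt 2\ \text{a.s.}
\]
Rescaling $Y:=\sqrt{(p-1)/2}\,\tilde D$ gives an $m\times(p-1)$ matrix whose rows are i.i.d., isotropic, and of deterministic norm $\sqrt{p-1}$.

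Second, I would invoke a non-asymptotic concentration inequality for empirical covariance matrices with independent isotropic, bounded rows—e.g.\ a Rudelson/Vershynin-type bound or the matrix Bernstein/Chernoff inequality combined with an $\varepsilon$-net on the unit sphere of $\mathbb{R}^{p-1}$. Under the hypothesis $m\ge cp$ with $c$ large enough, such a result yields
\[
\sqrt{m}-C\sqrt{p}\;\le\;s_{\min}(Y)\;\le\;s_{\max}(Y)\;\le\;\sqrt{m}+C\sqrt{p}
\]
with probability at least $1-2\exp(-\Omega(p))$. Dividing by the common rescaling factor $\sqrt{(p-1)/2}$ (which cancels in the ratio) and using that $\tilde D$ has the same nonzero singular values as $D$ gives
\[
\frac{\sigma_{\max}(D)}{\sigma_{\min}(D)} \;=\; \frac{s_{\max}(Y)}{s_{\min}(Y)} \;\le\; \frac{\sqrt{m}+C\sqrt{p}}{\sqrt{m}-C\sqrt{p}},
\]
and the assumption $m\ge cp$ with $c$ sufficiently large keeps the denominator strictly positive.

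The main obstacle is the concentration step. The rows $Y_k$ are bounded and isotropic, but, viewed as random vectors in $\mathbb{R}^{p-1}$, their sub-Gaussian norm scales with $\sqrt{p}$ rather than a constant—since $|Y_k\!\cdot\! y|$ can be as large as $O(\sqrt{p}\,\|V y\|_\infty)$ for spiky test directions. Consequently the black-box Vershynin-type result for sub-Gaussian rows is not tight, and one cannot simply plug in a constant $\psi_2$-norm. The argument must instead exploit the exact bound $\|Y_k\|_2=\sqrt{p-1}$ and the fact that $Y_k$ takes only $\binom{p}{2}$ possible values, using either a Rudelson-style inequality for bounded isotropic vectors or a careful net-plus-matrix-Chernoff argument, to secure the $\exp(-\Omega(p))$ tail together with a deviation of order $\sqrt{p}$. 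Establishing this concentration is the heart of the proof; the rest is algebraic bookkeeping.
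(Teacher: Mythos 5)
Your reduction is essentially the same as the paper's: pass to the restriction of $D$ to $\mathbf{1}^{\perp}$ (the paper does this through the eigendecomposition of $\mathbb{E}[d_kd_k^{T}]$, setting $\tilde d_k=(p/2\gamma)^{1/2}U_Q^{T}d_k$), obtain i.i.d.\ isotropic rows of deterministic norm $\sqrt{p-1}$, control their extreme singular values, and note that the rescaling cancels in the ratio. Where you diverge is the one step that carries all of the probabilistic content. The paper asserts the rows are sub-Gaussian ``since each entry of $\tilde d_k$ is bounded'' and then applies Vershynin's Theorem~5.39 for independent sub-Gaussian isotropic rows. You refuse to do this, and your reason is correct: taking the test direction $x$ with $U_Qx=\tfrac{1}{\sqrt2}(e_1-e_2)$, one has $|\langle x,\tilde d_k\rangle|=\sqrt{2(p-1)}$ with probability $2/(p(p-1))$, which forces $\|\langle x,\tilde d_k\rangle\|_{\psi_2}=\Omega(\sqrt{p/\log p})$; since the constants in Theorem~5.39 depend on this $\psi_2$ norm, the black-box application does not produce a deviation of order $\sqrt p$ with tail $\exp\{-\Omega(p)\}$. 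So you have located the real difficulty --- but your proposal then defers it (``a Rudelson-style inequality or a net-plus-matrix-Chernoff argument'') without carrying it out, and that deferred step is the entire theorem. As written, your argument is incomplete.

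Moreover, the concentration you hope to establish cannot hold in the regime $m=\Theta(p)$, so no choice of tool will close the gap for the statement as given. Observe that $D^{T}D$ is the Laplacian $L$ of the random multigraph with $m$ i.i.d.\ uniform edges on $p$ vertices, so $\sigma_{\max}^2(D)=\lambda_{\max}(L)\geq\max_i L_{ii}=\Delta_{\max}$, the maximum degree. For $m=cp$ the degrees are essentially Poisson with constant mean $2c$, hence $\Delta_{\max}=\Theta(\log p/\log\log p)\rightarrow\infty$ with probability tending to one, whereas the two-sided singular-value bound you and the paper aim for gives $\sigma_{\max}(D)\leq\sqrt{2/(p-1)}\,(\sqrt m+O(\sqrt p))=O(1)$. (The ratio itself also degenerates: sparse random graphs contain path components of length $\Theta(\log p)$, whose smallest nonzero Laplacian eigenvalue is $\Theta(1/\log^2 p)$.) The heavy-tailed-row results you mention (e.g.\ Vershynin's Theorem~5.41, or matrix Chernoff with rank-one summands of norm $p-1$) give exactly this weaker, logarithmically corrected behaviour and only polynomial failure probability, which is the honest answer here. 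In short: the obstacle you flagged is not a technicality you left for later; it is a genuine failure of the approach (and of the paper's own proof) at precisely the point you identified, at least for $m$ proportional to $p$.
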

From this theorem, one can see that
\begin{itemize}
\item If $m=cp$ where $c$ is large enough, then $$\kappa =
  {\sigma_{\max}(Z)\over \sigma_{\min}(Z)} = {\sigma_{\max}(D)\over
    \sigma_{\min}(D)}$$is bounded with high probability;
\item If $m=p(p-1)/2$ which is the maximal possible $m$, then $\kappa \rightarrow 1$.
\end{itemize}

{\rcc
We consider the last special case for $D$ as the total variance matrix corresponding to the $K$ dimensional signal $\theta\in \mathbb{R}^{p^K}$. In general, the condition number of $D$ is unbounded. Comparing with results in \citet{Needell2012, Needell2012m} which focus on this particular case and need a special measurement matrix, our results still have advantages in some aspects, for example, when the measurements satisfy $n=O(\kappa^4 s \log p)$, the estimate error from our analysis converges to zero while it diverges given the same number of measurements from their results.
}

\section{Numerical Simulations} \label{sec:simulation}

In this section, we use numerical simulations to verify some of our
theoretical results. Given a problem size $n$ and $p$ and condition
number $\kappa$, we randomly generate $D$ as follows. We first
construct a $p \times p$ diagonal matrix $D_0$ such that
$$\text{Diag}(D_0) > 0~~\text{and}~~\frac{\max(\text{Diag}(D_0))}{\min(\text{Diag}(D_0))} = \kappa.$$We
then construct a random basis matrix $V \in \mathbb{R}^{p \times
p}$, and let $D = D_0 V$. Clearly, $D$ has independent columns and
the condition number equals to $\kappa$. Next, a vector $x \in
\mathbb{R}^p$ is generated such that $x_i \sim \mathcal{N}(0, 1)$,
$i = 1, \ldots, \frac{p}{10}$ and $x_j = 0$, $j = \frac{p}{10} + 1,
\ldots, p$. $\theta^*$ is then obtained as $\theta^* = D^{-1}x$.
Finally, we generate a matrix $\Phi \in \mathbb{R}^{n \times p}$
with $\Phi_{ij} \sim \mathcal{N}(0, 1)$, noise $\epsilon \in
\mathbb{R}^n$ with $\epsilon_i \sim \mathcal{N}(0, 0.001)$ and $y =
\Phi \theta^* + \epsilon$.

We solve Eq.~\eqref{eqn_org_ls} using the standard optimization
package CVX\footnote{\url{cvxr.com/cvx/}} and $\lambda$ is set as
$\lambda = 2 \| (Z^+)^T X^T \epsilon \|_{\infty}$ as suggested by
Theorem~\ref{thm_main}. We use three different sizes of problems,
with $n \in \{40, 100, 200\}$, $p \in \{50, 150, 300\}$ and $\kappa$
ranging from 1 to 1000. For each problem setting, 100 random
instances are generated and the average performance is reported. We
use the relative error $\frac{\| \hat{\theta} - \theta^*
\|}{\|\theta^*\|}$ for evaluation, and present the performance with
respect to different condition numbers in
Figure~\ref{fig:perf-cond}. We can observe from
Figure~\ref{fig:perf-cond} that in all three cases the relative
error increases when the condition number increases. If we fix the
condition number, by comparing the three curves, we can see that the
relative error decreases when the problem size increases. These are
consistent with our theoretical results in Section 3 [see
Eq.~\eqref{eqn_thm_gaussian}].


\section{Conclusion and Future Work}\label{sec:conclusion}
{\rcb This paper considers the problem of estimating a specific type
of signals which is sparse under a given linear transformation $D$.}
A conventional convex relaxation technique is used to convert this
NP-hard combinatorial optimization into a tractable problem: {\rcc dictionary LASSO}. We
develop a unified framework to analyze the {\rcc dictionary LASSO} with a
generic $D$ and provide the estimate error bound. Our main results
establish that 1) in the noiseless case, if the condition number of
$D$ is bounded and the measurement number $n\geq \Omega(s\log(p))$
where $s$ is the sparsity number, then the true solution can be
recovered with high probability; and 2) in the noisy case, if the
condition number of $D$ is bounded and the measurement number grows
faster than $s\log (p)$ [that is, $s\log(p)=o(n)$], then the
estimate error converges to zero when $p$ and $s$ go to infinity
with probability 1. Our results are consistent with existing
literature for the special case $D=\bold{I}_{p\times p}$
(equivalently LASSO) and improve the existing analysis for the same
formulation. The condition number of $D$ plays a critical role in
our theoretical analysis. We consider the condition numbers in two
cases including the fused LASSO and the random graph. The condition
number in the fused LASSO case is bounded by a constant, while the
condition number in the random graph case is bounded with high
probability if $m\over p$ (that is,
$\#\text{edge}\over\#\text{vertex}$) is larger than a certain
constant. Numerical simulations are consistent with our theoretical
results.

{\rca In future work, we plan to study a more general formulation of
Eq.~\eqref{eqn_org_ls}:
$$\min_{\theta}:~f(\theta) + \lambda\|D\theta\|_1,$$where $D$ is an
arbitrary matrix and $f(\theta)$ is a convex and smooth function
{\rcb satisfying the restricted strong convexity property}. We
expect to obtain similar consistency properties for this general
formulation.}

\section*{Acknowledgments}
{
This work was supported in part by NSF grants IIS-0953662 and MCB-1026710. We would like to sincerely thank Professor Sijian Wang and
Professor Eric Bach of the University of Wisconsin-Madison for useful discussion and helpful advice.
}


\bibliographystyle{plainnat}
\bibliography{referenceCS787}


\appendix
\section*{Appendix A. Proof of Theorem~\ref{thm_main}}
We first introduce several important
definitions used in the proof. We divide the complementary index set
$T_0^c:=\{1,2,...,m\}\backslash T_0$ into a group of subsets $T_j$'s
($j=1,2,\cdots,J$), without intersection, such that $T_1$ indicates
the index set of the largest $l$ entries of $Z_{T_0^c}h$ (in the
absolute value), $T_2$ contains the next-largest $l$ entries of
$Z_{T_0^c}h$, and so forth\footnote{The last subset may contain
fewer  than $l$ elements.}. $T_0\cup T_1$ is denoted as $T_{01}$ for short.

First we give the proof skeleton of Theorem~\ref{thm_main}. Recall
that the estimate error $\|\hat\theta-\theta^*\|$ is bounded by the
sum of the free part error $\|d\|$ (that is,
$\|\hat\alpha-\alpha^*\|$) and the sparse part error $\|h\|$ (that
is, $\|\hat\beta-\beta^*\|$). Lemma~\ref{lem_d} and
Lemma~\ref{lem_h} studied the upper bound of $\|h\|$ and $\|d\|$
respectively and the proof of Theorem~\ref{thm_main} makes use of
these two upper bounds.
%

\begin{assumption}\label{ass_feasible}
Assume that
\begin{equation}
\|(Z^+)^TX^T(X\beta^*-y)\|_\infty=\|(Z^+)^TX^T\epsilon\|_\infty <
\lambda/2. \label{eqn_feasible}
\end{equation}
\end{assumption}

\begin{lemma}\label{lem_T0T0c}
Assume that Assumption~\ref{ass_feasible} holds. We have
\[
3\|Z_{T_0}h\|_1 \geq \|Z_{T_0^c}h\|_1.
\]
\end{lemma}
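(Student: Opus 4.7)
The plan is to run the standard LASSO cone-condition argument, with the twist that we must express the inner product against $X^T(X\beta^*-y)$ in a form that exposes the quantity $(Z^+)^T X^T\epsilon$ controlled by Assumption~\ref{ass_feasible}. To that end, I would first use optimality of $\hat\beta$ for the simplified objective
\[
f(\beta)=\tfrac{1}{2}\|X\beta-y\|^2+\lambda\|Z\beta\|_1,
\]
writing $f(\hat\beta)\le f(\beta^*)$ and expanding the quadratic with $h=\hat\beta-\beta^*$ to obtain
\[
\tfrac{1}{2}\|Xh\|^2+\langle h,\,X^T(X\beta^*-y)\rangle\;\le\;\lambda\bigl(\|Z\beta^*\|_1-\|Z\hat\beta\|_1\bigr).
\]
Dropping the nonnegative $\tfrac12\|Xh\|^2$ term will leave the key one-sided inequality.

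Next, to bring the hypothesis $\|(Z^+)^TX^T\epsilon\|_\infty<\lambda/2$ into play, I would use the identity $Z^+Z=I$ noted in the Notations subsection to rewrite
\[
\langle h,\,X^T(X\beta^*-y)\rangle=\langle Zh,\,(Z^+)^TX^T(X\beta^*-y)\rangle,
\]
and then apply H\"older's inequality. Here one also has to check, using $c=\Phi\theta^*+\epsilon$, $\Phi\theta^*=A\alpha^*+B\beta^*$, and the idempotence of $I-A(A^TA)^{-1}A^T$, that $X^T(X\beta^*-y)=-X^T\epsilon$; this is what makes the hypothesis in Assumption~\ref{ass_feasible} applicable and produces the bound
\[
\langle h,\,X^T(X\beta^*-y)\rangle\;\ge\;-\tfrac{\lambda}{2}\|Zh\|_1.
\]

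Finally I would control the right-hand side using sparsity of $Z\beta^*=D\theta^*$ on $T_0$. Since $(Z\beta^*)_{T_0^c}=0$, the triangle inequality gives $\|Z\hat\beta\|_1\ge\|Z_{T_0}\beta^*\|_1-\|Z_{T_0}h\|_1+\|Z_{T_0^c}h\|_1$, so
\[
\|Z\beta^*\|_1-\|Z\hat\beta\|_1\;\le\;\|Z_{T_0}h\|_1-\|Z_{T_0^c}h\|_1.
\]
Combining with the lower bound from the previous paragraph and writing $\|Zh\|_1=\|Z_{T_0}h\|_1+\|Z_{T_0^c}h\|_1$ gives
\[
-\tfrac{1}{2}\bigl(\|Z_{T_0}h\|_1+\|Z_{T_0^c}h\|_1\bigr)\;\le\;\|Z_{T_0}h\|_1-\|Z_{T_0^c}h\|_1,
\]
which rearranges to the claimed $\|Z_{T_0^c}h\|_1\le 3\|Z_{T_0}h\|_1$.

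There is no serious obstacle in this argument; it is essentially the Bickel--Ritov--Tsybakov/Cand\`es--Tao cone bound adapted from $\beta$ to $Z\beta$. The one subtle point worth emphasizing is the algebraic identity $X^T(X\beta^*-y)=-X^T\epsilon$, since it is this (together with the factorization through $Z^+Z=I$) that lets the statistical control $\|(Z^+)^TX^T\epsilon\|_\infty<\lambda/2$ directly translate into the dual norm bound $\|Zh\|_1$ rather than something involving $\|h\|_1$.
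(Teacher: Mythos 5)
Your argument is correct and follows essentially the same route as the paper's proof: optimality of $\hat\beta$, expansion of the quadratic with the $\tfrac12\|Xh\|^2$ term dropped, H\"older's inequality after inserting $Z^+Z=I$, and the $T_0/T_0^c$ split using $(Z\beta^*)_{T_0^c}=0$. The only cosmetic difference is that you re-derive the identity $X^T(X\beta^*-y)=-X^T\epsilon$ inside the proof, whereas the paper folds that equality into the statement of Assumption~\ref{ass_feasible} and verifies it separately in Lemma~\ref{lem_lambdabound}.
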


\begin{proof}
Since $\hat\beta$ is the optimal solution of
Eq.~\eqref{eqn_submain}, we have
\begin{align*}
0&\geq{1\over 2}\|X \hat{\beta} - {y}\|^2 - {1\over 2}\|X
\beta^*
-{y}\|^2 + \lambda(\|Z\hat{\beta}\|_1 - \|Z\beta^*\|_1)\\
&=\left[X \left({\hat{\beta}+\beta^* \over 2}\right) -
{y}\right]^T X h +
\lambda(\|Z\hat{\beta}\|_1 - \|Z\beta^*\|_1)\\
&=\left[X \left({\beta^*+ h/2}\right) - {y}\right]^T X h +
\lambda(\|Z\hat{\beta}\|_1 - \|Z\beta^*\|_1)\\
&\geq \left[X \beta^*- {y}\right]^T X h +
\lambda(\|Z\hat{\beta}\|_1 - \|Z\beta^*\|_1)\\
&= h^TZ^T(Z^+)^TX^T(X \beta^* -{y}) +
\lambda(\|Z_{T_0}\hat{\beta}\|_1-\|Z_{T_0}\beta^*\|_1 +
\|Z_{T_0^c}\hat{\beta}\|_1 - \|Z_{T_0^c}\beta^*\|_1)\\
&\geq -\|Zh\|_1\|(Z^+)^TX^T(X \beta^* -{y})\|_\infty +
\lambda(\|Z_{T_0}\hat{\beta}\|_1-\|Z_{T_0}\beta^*\|_1 +
\|Z_{T_0^c}\hat{\beta}\|_1 - \|Z_{T_0^c}\beta^*\|_1)\\
&\geq -\|Zh\|_1\lambda/2 +
\lambda(\|Z_{T_0}\hat{\beta}\|_1-\|Z_{T_0}\beta^*\|_1 +
\|Z_{T_0^c}\hat{\beta}\|_1)~~~~(\text{from~Assumption~\ref{ass_feasible}})\\
&\geq -(\|Z_{T_0}h\|_1+\|Z_{T_0^c}h\|_1)\lambda/2 + \lambda(-\|Z_{T_0}h\|_1+\|Z_{T_0^c}\hat\beta\|_1)\\
&\geq {1\over 2}\lambda\|Z_{T_0^c}h\|_1 - {3\over
2}\lambda\|Z_{T_0}h\|_1.
\end{align*}
It completes the proof.
\end{proof}

\begin{lemma}
For any matrices $P$, $Q$, $Z$, and $X$ with compatible dimensions
and $k,l_3>0$, we have
\begin{subequations}
\begin{align}
\max_{v\in \mathcal{H}(Z,l_3)} {\|P^TQv\|\over \|v\|} \leq& {1\over
2}\left(\rho^+_{[P,Q], Z}(k, l_3)-\rho^-_{[P,Q], {Z}}(k,l_3)\right)
\end{align}
\end{subequations}
where $k$ denotes the number of columns of $P$.
\end{lemma}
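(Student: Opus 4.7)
The plan is to use a polarization-style identity applied to the augmented operator $[P, Q]$, combined with the extremal definitions of $\rho^+$ and $\rho^-$, and then an optimal choice of the auxiliary vector on the $P$-side to recover $\|P^T Q v\|$ via Cauchy--Schwarz equality.

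First I would observe that for any $u \in \mathbb{R}^k$ and any $v \in \mathcal{H}(Z, l_3)$, both concatenated vectors $[u;\, v]$ and $[u;\, -v]$ lie in $\mathbb{R}^k \times \mathcal{H}(Z, l_3)$, because $\mathcal{H}(Z, l_3) = \{Z w : \|w\|_0 \leq l_3\}$ is symmetric under negation. Hence the definitions of $\rho^+_{[P,Q], Z}(k, l_3)$ and $\rho^-_{[P,Q], Z}(k, l_3)$ give
\[
\rho^-_{[P,Q], Z}(k, l_3)\,(\|u\|^2 + \|v\|^2) \;\leq\; \|Pu \pm Qv\|^2 \;\leq\; \rho^+_{[P,Q], Z}(k, l_3)\,(\|u\|^2 + \|v\|^2).
\]

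Next I would expand and subtract the $+$ and $-$ versions to extract the cross term: $\|Pu + Qv\|^2 - \|Pu - Qv\|^2 = 4\, u^T P^T Q v$. Combining the two inequalities above, this yields
\[
4\, u^T P^T Q v \;\leq\; \bigl(\rho^+_{[P,Q], Z}(k, l_3) - \rho^-_{[P,Q], Z}(k, l_3)\bigr)(\|u\|^2 + \|v\|^2).
\]
This holds for every admissible $u$, so I would specialize to $u = \|v\|\, P^T Q v /\|P^T Q v\|$ (assuming $P^T Q v \neq 0$; otherwise the bound is trivial), which makes $\|u\| = \|v\|$ and $u^T P^T Q v = \|P^T Q v\|\,\|v\|$ by Cauchy--Schwarz equality. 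Plugging in gives $4\,\|P^T Q v\|\,\|v\| \leq 2(\rho^+ - \rho^-)\|v\|^2$, i.e.\ $\|P^T Q v\|/\|v\| \leq \tfrac{1}{2}(\rho^+_{[P,Q], Z}(k, l_3) - \rho^-_{[P,Q], Z}(k, l_3))$. Taking the maximum over $v \in \mathcal{H}(Z, l_3)$ completes the proof.

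There is no real obstacle here; the argument is a standard parallelogram/polarization trick. The one point worth verifying carefully is that the chosen $u$ is indeed in $\mathbb{R}^k$ (true by definition, since $P$ has $k$ columns so $P^T Q v \in \mathbb{R}^k$) and that $\mathcal{H}(Z, l_3)$ is symmetric (immediate from its definition as a union of subspaces). Everything else is direct algebra from the definitions of $\rho^+$ and $\rho^-$ applied to the block matrix $[P, Q]$.
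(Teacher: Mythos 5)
Your proof is correct and follows essentially the same route as the paper: both rest on the polarization identity $\|Pu+Qv\|^2-\|Pu-Qv\|^2=4u^TP^TQv$ applied to the block matrix $[P,Q]$ together with the extremal definitions of $\rho^{\pm}_{[P,Q],Z}(k,l_3)$. The only cosmetic difference is that you substitute the optimal $u$ aligned with $P^TQv$ explicitly, whereas the paper writes $\|P^TQv\|$ as a maximum of $|u^TP^TQv|$ over unit vectors $u$; the two are equivalent.
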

\begin{proof}
The claim follows from
\begin{align*}
&\max_{v\in \mathcal{H}(Z,l_3)}{\|P^TQv\|\over \|v\|} \\
=&\max_{u\in \mathbb{R}^{k}, v\in \mathcal{H}(Z,l_3)} {|u^TP^TQv|\over \|u\|\|v\|} \\
=&\max_{\|u\|=1, \|v\|=1, u\in \mathbb{R}^k, v\in \mathcal{H}(Z,l_3)} {|u^TP^TQv|} &\\
=&\max_{\|u\|=1, \|v\|=1, u\in \mathbb{R}^k, v\in
\mathcal{H}(Z,l_3)} {1\over 4}\left| \left\|[P,Q]\left[
                                                                \begin{array}{c}
                                                                  u \\
                                                                  v \\
                                                                \end{array}
                                                              \right]\right\|^2
                                                              -\left\|[P,Q]\left[
                                                                \begin{array}{c}
                                                                  u \\
                                                                  -v \\
                                                                \end{array}
                                                              \right]\right\|^2
\right| \\
\leq &\max_{\|u\|=1, \|v\|=1, u\in \mathbb{R}^k, v\in
\mathcal{H}(Z,l_3)} {1\over 4}\left(\rho^+_{[P,Q],
Z}(k,l_3)\left\|\left[
                                                                \begin{array}{c}
                                                                  u \\
                                                                  v \\
                                                                \end{array}
                                                              \right]\right\|^2 - \rho^-_{[P,Q], {Z}}(k, l_3)\left\|\left[
                                                                \begin{array}{c}
                                                                  u \\
                                                                  -v \\
                                                                \end{array}
                                                              \right]\right\|^2\right)\\
\leq &{1\over 2}\left(\rho^+_{[P,Q], {Z}}(k, l_3)-\rho^-_{[P,Q],
{Z}}(k,l_3)\right).
\end{align*}
\end{proof}

\begin{lemma}\label{lem_sumTj_T0}
Assume that Assumption~\ref{ass_feasible} holds. We have
\begin{equation}
\sum_{j\geq 2}\|Z_{T_j}h\| \leq 3\sqrt{s/l}\|Z_{T_0}h\|.
\end{equation}
\begin{proof}
From the LHS, we have
\begin{align*}
\sum_{j\geq 2}\|Z_{T_j}h\| = &\sum_{j\geq 2} \sqrt{\|Z_{T_j}h\|^2}\\
\leq & \sum_{j\geq 2} \sqrt{l(\|Z_{T_{j-1}}h\|_1/l)^2}\\
\leq & \|Z_{T_0^c}h\|_1/\sqrt{l}\\
\leq &3\|Z_{T_0}h\|_1/\sqrt{l}~~~~(\text{from~Lemma~\ref{lem_T0T0c}})\\
\leq & 3\sqrt{s/l}\|Z_{T_0}h\|.
\end{align*}
It completes the proof.
\end{proof}
\end{lemma}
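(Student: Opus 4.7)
The plan is to use the standard compressive sensing ``shelling'' tail argument in the spirit of Cand\`es--Tao: by construction, the blocks $T_j$ ($j\geq 1$) partition $T_0^c$ into pieces of size $l$ on which the magnitudes of the entries of $Z h$ are monotonically non-increasing across $j$. This ordering lets me compare the $\ell_2$ norm on each block $T_j$ to the $\ell_1$ norm on the preceding block $T_{j-1}$, which then telescopes to a single $\ell_1$ bound on $Z_{T_0^c} h$. From there, Lemma~\ref{lem_T0T0c} (which is where Assumption~\ref{ass_feasible} enters) converts the $T_0^c$ tail mass to $T_0$ mass, and a Cauchy--Schwarz step on $T_0$ (which has only $s$ entries) converts $\ell_1$ to $\ell_2$.

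First I would exploit the ordering. For each $j \geq 2$, every entry of $Z_{T_j} h$ is, in absolute value, at most the minimum absolute entry among the $l$ coordinates in $T_{j-1}$; in particular it is bounded by the average, i.e.\ $\|Z_{T_j} h\|_\infty \leq \|Z_{T_{j-1}} h\|_1 / l$. Combining with the crude bound $\|Z_{T_j} h\| \leq \sqrt{l}\, \|Z_{T_j} h\|_\infty$ gives the key per-block inequality
\[
\|Z_{T_j} h\| \;\leq\; \|Z_{T_{j-1}} h\|_1 / \sqrt{l}.
\]

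Next I would sum over $j \geq 2$, producing $\sum_{j\geq 2}\|Z_{T_j} h\| \leq \|Z_{T_0^c} h\|_1 / \sqrt{l}$. Then I invoke Lemma~\ref{lem_T0T0c} to replace $\|Z_{T_0^c} h\|_1$ by $3\|Z_{T_0} h\|_1$, and apply Cauchy--Schwarz on the $s$-element support $T_0$ to pass from $\ell_1$ to $\ell_2$, i.e.\ $\|Z_{T_0} h\|_1 \leq \sqrt{s}\,\|Z_{T_0} h\|$. Chaining these three inequalities yields the claimed factor $3\sqrt{s/l}$.

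There is no real obstacle; the calculation is routine. The only subtle point is the ``min $\leq$ average'' trick, which is exactly the reason for shelling $T_0^c$ in order of magnitude in the first place. A minor bookkeeping remark is that the last block $T_J$ may contain fewer than $l$ entries, but a smaller block only tightens the $\sqrt{l}\,\|\cdot\|_\infty$ conversion, so the argument is unaffected.
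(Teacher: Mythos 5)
Your proposal is correct and is essentially the paper's own proof: the same shelling/min-versus-average step $\|Z_{T_j}h\|\leq \|Z_{T_{j-1}}h\|_1/\sqrt{l}$, the same telescoping sum to $\|Z_{T_0^c}h\|_1/\sqrt{l}$, the same invocation of Lemma~\ref{lem_T0T0c}, and the same Cauchy--Schwarz step on the $s$ entries of $T_0$. No differences worth noting.
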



\begin{lemma}
Assume that Assumption~\ref{ass_feasible} holds. We have
\begin{equation}
\|Xh\|^2 \geq W_{Xh,1}\|Z^+_{T_{01}}Z_{T_{01}}h\|^2 -
W_{Xh,2}\|Z^+_{T_{01}}Z_{T_{01}}h\|\|Z_{T_{01}}h\|,
\end{equation}
where $W_{Xh,1}$ and $W_{Xh,2}$ are defined in
Theorem~\ref{thm_main}.
\end{lemma}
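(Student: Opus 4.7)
The plan is to decompose $h$ into pieces that each lie in $\mathcal{H}(Z^+,\cdot)$ by exploiting the identity $Z^+Z=I$, and then combine the restricted-eigenvalue bounds on $\|Xh\|^2$ piece by piece. For each index set $T_j$, let $(Z_{T_j}h)\in\mathbb{R}^m$ denote the vector obtained from $Zh\in\mathbb{R}^m$ by zeroing out all entries outside $T_j$. Since $\{T_0,T_1,T_2,\ldots\}$ partitions $\{1,\ldots,m\}$ and $Z^+Z=I$, we have
\[
h \;=\; Z^+Zh \;=\; v_{01}+\sum_{j\ge 2}v_j,\qquad v_{01}:=Z^+(Z_{T_{01}}h),\qquad v_j:=Z^+(Z_{T_j}h).
\]
Crucially, $(Z_{T_{01}}h)$ has support of size at most $s+l$ and each $(Z_{T_j}h)$ has support of size at most $l$, so $v_{01}\in \mathcal{H}(Z^+,s+l)$ and $v_j\in \mathcal{H}(Z^+,l)\subseteq \mathcal{H}(Z^+,s+l)$.

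Next I would apply the elementary inequality $\|a+b\|^2\ge \|a\|^2-2\|a\|\|b\|$ with $a=Xv_{01}$ and $b=X\sum_{j\ge 2}v_j$, reducing the lemma to (i) a lower bound on $\|Xv_{01}\|^2$ and (ii) an upper bound on $\|X\sum_{j\ge 2}v_j\|$. For (i), the membership $v_{01}\in\mathcal{H}(Z^+,s+l)$ directly gives
\[
\|Xv_{01}\|^2 \;\ge\; \rho^-_{X,Z^+}(s+l)\,\|v_{01}\|^2 \;=\; W_{Xh,1}\,\|Z^+_{T_{01}}Z_{T_{01}}h\|^2.
\]
For (ii), the triangle inequality and $v_j\in\mathcal{H}(Z^+,s+l)$ give
\[
\Bigl\|X\sum_{j\ge 2}v_j\Bigr\|\le \sqrt{\rho^+_{X,Z^+}(s+l)}\sum_{j\ge 2}\|v_j\|.
\]
Since $v_j=Z^+(Z_{T_j}h)$ and $\sigma_{\max}(Z^+)=\sigma_{\min}^{-1}(Z)$, we get $\|v_j\|\le\sigma_{\min}^{-1}(Z)\|Z_{T_j}h\|$, and then Lemma~\ref{lem_sumTj_T0} yields $\sum_{j\ge 2}\|Z_{T_j}h\|\le 3\sqrt{s/l}\,\|Z_{T_0}h\|\le 3\sqrt{s/l}\,\|Z_{T_{01}}h\|$. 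Combined with the companion bound $\|Xv_{01}\|\le\sqrt{\rho^+_{X,Z^+}(s+l)}\,\|v_{01}\|$, the cross term is at most
\[
2\|Xv_{01}\|\Bigl\|X\sum_{j\ge 2}v_j\Bigr\|\le 6\sigma_{\min}^{-1}(Z)\rho^+_{X,Z^+}(s+l)\sqrt{s/l}\,\|v_{01}\|\,\|Z_{T_{01}}h\|,
\]
which is exactly $W_{Xh,2}\,\|Z^+_{T_{01}}Z_{T_{01}}h\|\|Z_{T_{01}}h\|$, and the claim follows.

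The main obstacle is recognizing the correct decomposition: a naive attempt to write $h$ directly in terms of the support sets $T_j$ fails because the $T_j$'s partition row-indices of $Z$, not coordinates of $h$. The trick is to insert $Z^+Z$, split the inner $Zh$ by the $T_j$'s, and then use the fact that each resulting summand lies in $\mathcal{H}(Z^+,\cdot)$ with controllable sparsity, so that $\rho^\pm_{X,Z^+}$ applies. Once this decomposition is in place, the remaining steps—triangle inequality, the operator norm bound $\|Z^+w\|\le\sigma_{\min}^{-1}(Z)\|w\|$, and an appeal to Lemma~\ref{lem_sumTj_T0}—are routine bookkeeping.
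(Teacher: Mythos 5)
Your proof is correct and follows essentially the same route as the paper: insert $Z^+Z=I$, split $Zh$ over the blocks $T_{01},T_2,\dots$, lower-bound the main term by $\rho^-_{X,Z^+}(s+l)$, and control the cross term via Cauchy--Schwarz, $\|Z^+w\|\le\sigma_{\min}^{-1}(Z)\|w\|$, and Lemma~\ref{lem_sumTj_T0}. The only cosmetic difference is that the paper bounds the two factors in the cross term by $\sqrt{\rho^+_{X,Z^+}(s+l)}\sqrt{\rho^+_{X,Z^+}(l)}$ before relaxing to $\rho^+_{X,Z^+}(s+l)$, whereas you use $\mathcal{H}(Z^+,l)\subseteq\mathcal{H}(Z^+,s+l)$ directly; the result is identical.
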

\begin{proof}
The inequality is derived from
\begin{align*}
&\|Xh\|^2=|h^TZ^T(Z^+)^TX^TXZ^+Zh| \\
\geq &h^TZ^T_{T_{01}}(Z^+_{T_{01}})^TX^TXZ^+_{T_{01}}Z_{T_{01}}h -
2\sum_{j\geq
2}|h^TZ^T_{T_{01}}(Z^+_{T_{01}})^TX^TXZ^+_{T_{j}}Z_{T_{j}}h|\\
\geq &h^TZ^T_{T_{01}}(Z^+_{T_{01}})^TX^TXZ^+_{T_{01}}Z_{T_{01}}h -
2\|XZ^+_{T_{01}}Z_{T_{01}}h\|\sum_{j\geq
2}\|XZ^+_{T_{j}}Z_{T_{j}}h\|\\
\geq &\rho^-_{X,Z^+}(s+l)\|Z^+_{T_{01}}Z_{T_{01}}h\|^2 -2
\sqrt{\rho^+_{X,Z^+}(s+l)}\sqrt{\rho^+_{X,Z^+}(l)}\|Z^+_{T_{01}}Z_{T_{01}}h\|\sum_{j\geq
2}\|Z^+_{T_{j}}Z_{T_{j}}h\|\\
\geq &\rho^-_{X,Z^+}(s+l)\|Z^+_{T_{01}}Z_{T_{01}}h\|^2 -
2\rho^+_{X,Z^+}(s+l)\|Z^+_{T_{01}}Z_{T_{01}}h\|\sigma_{\min}^{-1}(Z)\sum_{j\geq
2}\|Z_{T_{j}}h\|\\ \geq
&\rho^-_{X,Z^+}(s+l)\|Z^+_{T_{01}}Z_{T_{01}}h\|^2 -6
\rho^+_{X,Z^+}(s+l)\|Z^+_{T_{01}}Z_{T_{01}}h\|\sigma_{\min}^{-1}(Z)\|Z_{T_0}h\|/\sqrt{s/l}\quad
(\text{from Lemma~\ref{lem_sumTj_T0}})\\ \geq
&\rho^-_{X,Z^+}(s+l)\|Z^+_{T_{01}}Z_{T_{01}}h\|^2 -
6\sigma_{\min}^{-1}(Z)\rho^+_{X,Z^+}(s+l)\|Z^+_{T_{01}}Z_{T_{01}}h\|\|Z_{T_{01}}h\|\sqrt{s/l}.
\end{align*}
It completes the proof.
\end{proof}

\begin{lemma}
Assume that Assumption~\ref{ass_feasible} holds. We have
\begin{equation}
\|Xh\|^2 \leq 6\sqrt{s}\lambda \|Z_{T_{01}}h\|.
\end{equation}
\end{lemma}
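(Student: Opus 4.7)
The plan is to start from the optimality of $\hat\beta$ in~\eqref{eqn_submain}, i.e., $f(\hat\beta)\le f(\beta^*)$, and expand the quadratic term via $\hat\beta=\beta^*+h$. This yields
$$\tfrac{1}{2}\|Xh\|^2+(X\beta^*-y)^T Xh+\lambda(\|Z\hat\beta\|_1-\|Z\beta^*\|_1)\le 0.$$
Unlike the derivation in Lemma~\ref{lem_T0T0c}, where the $\tfrac12\|Xh\|^2$ term was discarded, here I would retain it on the left-hand side; the rest of the work is just to upper-bound the other two contributions by $\|Z_{T_{01}}h\|$.

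First I would handle the cross term. Using the explicit formulas $X=(I-A(A^TA)^{-1}A^T)B$ and $y=(I-A(A^TA)^{-1}A^T)c$ together with $c=\Phi\theta^*+\epsilon$ and the split $\theta^*=V_\alpha\alpha^*+V_\beta\beta^*$, one gets $B\beta^*-c=-A\alpha^*-\epsilon$, so the projector kills $A\alpha^*$ and $X\beta^*-y=-(I-A(A^TA)^{-1}A^T)\epsilon$. Multiplying on the left by $X^T$ and using idempotence of the projector gives $X^T(X\beta^*-y)=-X^T\epsilon$. Hence, by H\"older and Assumption~\ref{ass_feasible},
$$|(X\beta^*-y)^T Xh|=|h^T Z^T(Z^+)^T X^T\epsilon|\le\|Zh\|_1\,\|(Z^+)^T X^T\epsilon\|_\infty\le\tfrac{\lambda}{2}\|Zh\|_1.$$

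Next, since $Z\beta^*$ is supported on $T_0$, the reverse triangle inequality and the decomposition $\|Z\hat\beta\|_1=\|Z_{T_0}\hat\beta\|_1+\|Z_{T_0^c}h\|_1$ give $\|Z\beta^*\|_1-\|Z\hat\beta\|_1\le\|Z_{T_0}h\|_1-\|Z_{T_0^c}h\|_1$. Combining with the previous display yields
$$\tfrac{1}{2}\|Xh\|^2\le\tfrac{\lambda}{2}\bigl(\|Z_{T_0}h\|_1+\|Z_{T_0^c}h\|_1\bigr)+\lambda\bigl(\|Z_{T_0}h\|_1-\|Z_{T_0^c}h\|_1\bigr)=\tfrac{3\lambda}{2}\|Z_{T_0}h\|_1-\tfrac{\lambda}{2}\|Z_{T_0^c}h\|_1\le\tfrac{3\lambda}{2}\|Z_{T_0}h\|_1.$$
A final Cauchy--Schwarz step $\|Z_{T_0}h\|_1\le\sqrt{s}\,\|Z_{T_0}h\|\le\sqrt{s}\,\|Z_{T_{01}}h\|$ delivers $\|Xh\|^2\le 3\sqrt{s}\lambda\,\|Z_{T_{01}}h\|$, which is even sharper than the stated bound of $6\sqrt{s}\lambda\,\|Z_{T_{01}}h\|$ (the looser constant is presumably kept for uniformity with other estimates in the paper).

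There is no real obstacle here; the only non-routine ingredient is the algebraic identity $X^T(X\beta^*-y)=-X^T\epsilon$, which needs the explicit projector formulas for $X$ and $y$ together with the observation that the projection annihilates the $A\alpha^*$ component of $B\beta^*-c$. Everything else is the standard LASSO optimality argument, with $\|\cdot\|_1/\|\cdot\|_\infty$ duality taking place in the transformed coordinates $Z\beta,(Z^+)^TX^T\epsilon$ rather than in $\beta,X^T\epsilon$.
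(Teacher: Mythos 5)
Your proof is correct, and it takes a genuinely different route from the paper's. The paper starts from the first-order (KKT) optimality condition: it writes $X^T(X\hat\beta-y)=-\lambda Z^Tg$ for a subgradient $g$ with $\|g\|_\infty\le 1$, multiplies by $h^T$, and obtains $\|Xh\|^2\le \tfrac{3}{2}\lambda\|Zh\|_1$; it then must invoke the cone condition of Lemma~\ref{lem_T0T0c} ($\|Z_{T_0^c}h\|_1\le 3\|Z_{T_0}h\|_1$) to convert $\|Zh\|_1$ into $4\|Z_{T_0}h\|_1$, arriving at $6\sqrt{s}\lambda\|Z_{T_{01}}h\|$. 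You instead use the zeroth-order optimality $f(\hat\beta)\le f(\beta^*)$ and, crucially, \emph{retain} the $\tfrac12\|Xh\|^2$ term that the paper's proof of Lemma~\ref{lem_T0T0c} discards; the off-support contribution then appears with a favorable sign ($-\tfrac{\lambda}{2}\|Z_{T_0^c}h\|_1$) and can simply be dropped, so you never need Lemma~\ref{lem_T0T0c} and you land on the sharper constant $3\sqrt{s}\lambda\|Z_{T_{01}}h\|$. Your identity $X^T(X\beta^*-y)=-X^T\epsilon$ is exactly the one the paper also establishes (inside its KKT chain and in Lemma~\ref{lem_lambdabound}), and the step $h=Z^+Zh$ is justified since $Z^+Z=I_r$. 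The only trade-off is cosmetic: the downstream lemmas (e.g., the third inequality of Lemma~\ref{lem_h}) are written with the constant $6$, so if one adopted your bound the constant $W_\theta$ could be tightened accordingly, but nothing breaks either way.
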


\begin{proof}
From the optimality condition, we have that there exists $g$
satisfying $Z^Tg\in
\partial \|Z\hat{\beta}\|_1$ and $\|g\|_{\infty} \le 1$ such that
\begin{align*}
&X^T(X\hat\beta-y) = -\lambda Z^Tg\\
\Rightarrow &X^T(I-A(A^TA)^{-1}A^T)(B\hat\beta-c)= -\lambda Z^Tg~~~~(\text{due to the definition of $X$})\\
\Rightarrow &X^T(I-A(A^TA)^{-1}A^T)(B\hat\beta-A\alpha^*-B\beta^*-\epsilon)= -\lambda Z^Tg\\
\Rightarrow &X^T(I-A(A^TA)^{-1}A^T)(B\hat\beta-B\beta^*-\epsilon)= -\lambda Z^Tg\\
\Rightarrow &X^T(X(\hat\beta-\beta^*)-\epsilon) = -\lambda Z^Tg\\
\Rightarrow &X^TXh = -\lambda Z^Tg+X^T\epsilon\\
\Rightarrow &h^TX^TXh =-\lambda h^TZ^Tg +h^TX^T\epsilon\\
\Rightarrow &\|Xh\|^2 \leq \lambda \|Zh\|_1 \|g\|_\infty + \|Zh\|_1
\|(Z^+)^TX^T\epsilon\|_\infty \\
\Rightarrow &\|Xh\|^2 \leq {3\over 2}\lambda \|Zh\|_1 \leq {3\over
2}\lambda (\|Z_{T_0}h\|_1 + \|Z_{T_0^c}h\|_1) \leq 6\lambda
\|Z_{T_0}h\|_1 \leq 6\sqrt{s}\lambda \|Z_{T_{01}}h\|.
\end{align*}
It completes the proof.
\end{proof}

\begin{lemma} \label{lem_d}
Assume that Assumption~\ref{ass_feasible} holds. We have
\begin{align*}
\|d\|=&\|\hat\alpha-\alpha^*\|\leq
W_{d,1}\|Z^+_{T_{01}}Z_{T_{01}}h\| + W_{d,2}\|Z_{T_0}h\| +
\|(A^TA)^{-1}A^T\epsilon\|,
\end{align*}
where $W_{d,1}$ and $W_{d,2}$ are defined in Theorem~\ref{thm_main}.
\end{lemma}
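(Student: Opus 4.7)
The plan is to first write $d = \hat\alpha - \alpha^*$ in closed form. Substituting $c = A\alpha^* + B\beta^* + \epsilon$ into $\hat\alpha = -(A^TA)^{-1}A^T(B\hat\beta - c)$ and simplifying, the $A\alpha^*$ piece collapses to $\alpha^*$, which leaves
$$d \;=\; -(A^TA)^{-1}A^T B h \;+\; (A^TA)^{-1}A^T\epsilon.$$
The triangle inequality then gives $\|d\| \leq \sigma_{\min}^{-1}(A^TA)\,\|A^T B h\| + \|(A^TA)^{-1}A^T\epsilon\|$, so the noise term already reproduces the third summand of the claim and I only need to bound $\|A^T B h\|$ by $(W_{d,1}\sigma_{\min}(A^TA))\|Z^+_{T_{01}}Z_{T_{01}}h\| + (W_{d,2}\sigma_{\min}(A^TA))\|Z_{T_0}h\|$.

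To bound $\|A^T B h\|$, I would use $Z^+Z = I$ to decompose $h = \sum_j Z^+_{T_j}Z_{T_j}h$, so
$$A^T B h \;=\; A^T B Z^+_{T_{01}}Z_{T_{01}}h \;+\; \sum_{j\geq 2} A^T B Z^+_{T_j}Z_{T_j}h.$$
The point is that each vector $Z^+_{T_j}Z_{T_j}h$ lies in $\mathcal{H}(Z^+,|T_j|)$, so the previously established cross-term inequality applied with $P=A$ (having $p-r$ columns) and $Q=B$ yields
$\|A^T B Z^+_{T_{01}}Z_{T_{01}}h\| \leq \tfrac{1}{2}(\bar\rho^+(p-r,s+l)-\bar\rho^-(p-r,s+l))\|Z^+_{T_{01}}Z_{T_{01}}h\|$, and an analogous estimate with parameter $l$ for every $j\geq 2$. (Since $\rho^+$ is non-decreasing and $\rho^-$ is non-increasing in the second argument, these can be relaxed to the $s+l+p-r$ and $l+p-r$ arguments that appear in the definitions of $W_{d,1}$, $W_{d,2}$.)

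For the tail I would chain two facts already in the paper: $\|Z^+_{T_j}Z_{T_j}h\| \leq \sigma_{\min}^{-1}(Z)\|Z_{T_j}h\|$, using $\sigma_{\max}(Z^+) = \sigma_{\min}^{-1}(Z)$, followed by Lemma~\ref{lem_sumTj_T0} to get $\sum_{j\geq 2}\|Z_{T_j}h\| \leq 3\sqrt{s/l}\,\|Z_{T_0}h\|$. Combining everything and multiplying by $\sigma_{\min}^{-1}(A^TA)$ matches exactly the definitions of $W_{d,1}$ (the $\|Z^+_{T_{01}}Z_{T_{01}}h\|$ coefficient) and $W_{d,2}$ (the $\|Z_{T_0}h\|$ coefficient, which picks up both the $3/2$ factor and the $\sqrt{s/l}\,\sigma_{\min}^{-1}(Z)$ factor).

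The step I expect to be the most delicate is purely bookkeeping rather than analytic: keeping clear that the object in $\mathcal{H}(Z^+,|T_j|)$ is $Z^+_{T_j}Z_{T_j}h$ (not $Z_{T_j}h$), that the ``free dimension'' $p-r$ enters as the first argument of $\rho^\pm$ because $A$ has $p-r$ columns, and that the invertibility of $A^TA$ is used to bound $\|(A^TA)^{-1}v\|$ by $\sigma_{\min}^{-1}(A^TA)\|v\|$. Once this notation is pinned down, each inequality in the chain is an invocation of an earlier lemma or a spectral-norm estimate, so no new analytic obstacle arises.
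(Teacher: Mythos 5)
Your proposal is correct and follows essentially the same route as the paper's proof: the same closed form $d=-(A^TA)^{-1}A^T(Bh-\epsilon)$, the same splitting of $A^TBh$ via $h=Z^+_{T_{01}}Z_{T_{01}}h+\sum_{j\geq2}Z^+_{T_j}Z_{T_j}h$, the same invocation of the cross-term lemma with $P=A$, $Q=B$, and the same tail estimate through $\sigma_{\min}^{-1}(Z)$ and Lemma~\ref{lem_sumTj_T0}. Your remark about relaxing the second arguments of $\bar\rho^\pm$ from $s+l$ and $l$ to $s+l+p-r$ and $l+p-r$ by monotonicity is in fact a point the paper glosses over, so it is a welcome clarification rather than a deviation.
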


\begin{proof}
Noticing that $\hat{\alpha} = -(A^TA)^{-1}A^T(B\hat\beta-c)$, we
have
\begin{align*}
& \hat{\alpha}= -(A^TA)^{-1}A^T(B\hat\beta-A\alpha^* - B\beta^* - \epsilon)\\
  \Rightarrow & \hat\alpha - \alpha^*= -(A^TA)^{-1}A^T(B(\hat\beta-\beta^*) -
  \epsilon)\\
  \Rightarrow &d= -(A^TA)^{-1}A^T(Bh -
  \epsilon)
\end{align*}
It follows that
\begin{equation} \label{eqn_lem6.1}
\|d\| \leq \sigma_{\min}^{-1}(A^TA)\|A^TBh\|
+\|(A^TA)^{-1}A^T\epsilon\|.
\end{equation}

Consider $\|A^TBh\|$ as follows:
\begin{align*}
\|A^TBh\|=& \|A^T(BZ^+_{T_{01}}Z_{T_{01}}+\sum_{j\geq
2}BZ^+_{T_j}Z_{T_j})h\| \\
\leq & \|A^TBZ^+_{T_{01}}Z_{T_{01}}h\|+\sum_{j\geq
2}\|A^TBZ^+_{T_j}Z_{T_j}h\| \\
\leq & {1\over 2}(\bar{\rho}^+(p-r, s+l)-\bar{\rho}^-(p-r,
s+l))\|Z^+_{T_{01}}Z_{T_{01}}h\|
+ \\
&{1\over
2}(\bar{\rho}^+(p-r, l)-\bar{\rho}^-(p-r, l))\sum_{j\geq2}\|Z^+_{T_{j}}Z_{T_{j}}h\|\\
\leq & {1\over 2}(\bar{\rho}^+(p-r, s+l)-\bar{\rho}^-(p-r,
s+l))\|Z^+_{T_{01}}Z_{T_{01}}h\| +\\ &{\sigma_{\min}^{-1}(Z)\over
2}(\bar{\rho}^+(p-r, l)-\bar{\rho}^-(p-r,l))\sum_{j\geq2}\|Z_{T_{j}}h\|\\
\leq & {1\over 2}(\bar{\rho}^+(p-r, s+l)-\bar{\rho}^-(p-r,
s+l))\|Z^+_{T_{01}}Z_{T_{01}}h\| +\\ &{3\sigma_{\min}^{-1}(Z)\over
2}(\bar{\rho}^+(p-r,l)-\bar{\rho}^-(p-r,l)) \|Z_{T_0}h\|\sqrt{s/l}.
\end{align*}
The last inequality is due to Lemma~\ref{lem_sumTj_T0}. Plugging it
into Eq.~\eqref{eqn_lem6.1}, we obtain the claim.
\end{proof}

\begin{lemma} \label{lem_h}
Assume that Assumption~\ref{ass_feasible} holds. For any integer $l >
(3\kappa)^2s$, we have that
\begin{subequations}
\begin{align}
\|h\|\leq & \|Z^+_{T_{01}}Z_{T_{01}}h\| + W_{h}\|Z_{T_{01}}h\|;\\
\|Z_{T_{01}}h\|\leq & W_{\sigma}\|Z^+_{T_{01}}Z_{T_{01}}h\|; \\
\|Z^+_{T_{01}}Z_{T_{01}}h\| \leq & \frac{W_{\sigma}6\sqrt{s}\lambda}{W_{Xh, 1}-W_{Xh,2}W_{\sigma}};\\
\|Z_{T_{01}}h\| \leq & \frac{W_{\sigma}^26\sqrt{s}\lambda}{W_{Xh,
1}-W_{Xh,2}W_{\sigma}},
\end{align}
\end{subequations}
where $W_{\sigma}$ and $W_h$ are defined in Theorem~\ref{thm_main}.
\end{lemma}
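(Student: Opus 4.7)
The plan is to derive the four inequalities in the stated order, since each subsequent part builds directly on the previous ones, and to lean heavily on Lemma~\ref{lem_sumTj_T0} and the two $\|Xh\|^2$ bounds already established.

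For part (a), I would start from the identity $h = Z^+ Z h$ (which holds because $Z^+ Z = I$) and split the right-hand side across the disjoint support blocks $T_{01}, T_2, T_3, \dots$. Applying the triangle inequality gives
\[
\|h\| \;\leq\; \|Z^+_{T_{01}}Z_{T_{01}}h\| \;+\; \sum_{j\geq 2}\|Z^+_{T_j}Z_{T_j}h\|.
\]
Then I would bound each tail term by $\sigma_{\min}^{-1}(Z)\|Z_{T_j}h\|$, sum over $j\geq 2$, and apply Lemma~\ref{lem_sumTj_T0} to get $\sum_{j\geq 2}\|Z_{T_j}h\| \leq 3\sqrt{s/l}\|Z_{T_0}h\| \leq 3\sqrt{s/l}\|Z_{T_{01}}h\|$. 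Collecting constants yields the factor $W_h = 3\sqrt{s/l}\sigma_{\min}^{-1}(Z)$.

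For part (b), I would chain $\|Z_{T_{01}}h\| \leq \sigma_{\max}(Z)\|h\|$ with the inequality from (a), obtaining
\[
\|Z_{T_{01}}h\| \;\leq\; \sigma_{\max}(Z)\|Z^+_{T_{01}}Z_{T_{01}}h\| \;+\; \sigma_{\max}(Z)W_h\|Z_{T_{01}}h\|.
\]
The main subtlety here is rearranging: this requires $1 - \sigma_{\max}(Z)W_h > 0$, i.e.\ $3\sqrt{s/l}\,\sigma_{\max}(Z)/\sigma_{\min}(Z) < 1$, which is exactly the standing assumption $l > (3\kappa)^2 s$. Once that is in hand, dividing gives $\|Z_{T_{01}}h\| \leq W_\sigma \|Z^+_{T_{01}}Z_{T_{01}}h\|$ after simplifying the constant into the stated form of $W_\sigma$.

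For parts (c) and (d), I would combine the two $\|Xh\|^2$ bounds from the two preceding lemmas:
\[
W_{Xh,1}\|Z^+_{T_{01}}Z_{T_{01}}h\|^2 - W_{Xh,2}\|Z^+_{T_{01}}Z_{T_{01}}h\|\|Z_{T_{01}}h\| \;\leq\; \|Xh\|^2 \;\leq\; 6\sqrt{s}\,\lambda\,\|Z_{T_{01}}h\|.
\]
I would then substitute the bound from (b) on the term $\|Z_{T_{01}}h\|$ appearing in the middle expression, so the left side becomes $(W_{Xh,1} - W_{Xh,2}W_\sigma)\|Z^+_{T_{01}}Z_{T_{01}}h\|^2$, while the right side becomes $6\sqrt{s}\,\lambda\,W_\sigma\|Z^+_{T_{01}}Z_{T_{01}}h\|$. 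Dividing through (using $W_{Xh,1} - W_{Xh,2}W_\sigma > 0$, which is an assumption carried over from Theorem~\ref{thm_main}) gives (c), and multiplying (c) by $W_\sigma$ via (b) gives (d). The main obstacle is simply bookkeeping: making sure the positivity conditions $1 - \sigma_{\max}(Z)W_h > 0$ and $W_{Xh,1} - W_{Xh,2}W_\sigma > 0$ are invoked at the right moments so the divisions are legal, but no new analytical ideas are needed beyond the tools already developed in the appendix.
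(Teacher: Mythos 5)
Your proposal is correct and follows essentially the same route as the paper's own proof: split $h=Z^+Zh$ over the blocks and invoke Lemma~\ref{lem_sumTj_T0} for (a), rearrange using $\|Z_{T_{01}}h\|\leq\sigma_{\max}(Z)\|h\|$ and the condition $l>(3\kappa)^2s$ for (b), then sandwich $\|Xh\|^2$ between the two preceding lemmas and divide for (c) and (d). The only cosmetic omission is the trivial case $\|Z^+_{T_{01}}Z_{T_{01}}h\|=0$ in (c), which the paper dispatches in one line.
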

\begin{proof}
The first inequality is obtained from
\begin{align*}
\|h\| = \|Z^+Zh\| \leq \|Z^+_{T_{01}}Z_{T_{01}}h\| + \sum_{j\geq
2}\|Z^+_{T_j}Z_{T_j}h\| \leq \|Z^+_{T_{01}}Z_{T_{01}}h\| +
3\sqrt{s/l}\sigma_{\min}^{-1}(Z) \|Z_{T_{01}}h\|.
\end{align*}
It follows that
\begin{align*}
&\sigma_{\max}^{-1}(Z)\|Z_{T_{01}}h\| \leq \sigma_{\min}(Z^+)\|Zh\| \leq \|Z^+Zh\| \leq \|h\| \leq \|Z^+_{T_{01}}Z_{T_{01}}h\|  + {3\sqrt{s/l}\sigma_{\min}^{-1}(Z)} \|Z_{T_{01}}h\| \\
\Rightarrow &\|Z_{T_{01}}h\| \leq (\sigma_{\max}^{-1}(Z)-3\sqrt{s/l}\sigma_{\min}^{-1}(Z))^{-1}\|Z^+_{T_{01}}Z_{T_{01}}h\|\\
\Rightarrow &\|Z_{T_{01}}h\| \leq
\frac{\sigma_{\max}(Z)\sigma_{\min}(Z)}{\sigma_{\min}(Z)-3\sqrt{s/l}\sigma_{\max}(Z)}\|Z^+_{T_{01}}Z_{T_{01}}h\|
= W_{\sigma}\|Z^+_{T_{01}}Z_{T_{01}}h\|
\end{align*}
which implies the second inequality. The third inequality is
satisfied automatically if $\|Z^+_{T_{01}}Z_{T_{01}}h\|=0$. We only
need to prove the situation $\|Z^+_{T_{01}}Z_{T_{01}}h\|\neq 0$:
\begin{align*}
&W_{Xh, 1}\|Z^+_{T_{01}}Z_{T_{01}}h\|^2  -  W_{Xh,2}\|Z_{T_{01}}h\|\|Z^+_{T_{01}}Z_{T_{01}}h\| \leq 6\sqrt{s}\lambda \|Z_{T_{01}}h\| \leq 6\sqrt{s}W_{\sigma}\lambda \|Z^+_{T_{01}}Z_{T_{01}}h\|\\
\Rightarrow &W_{Xh, 1}\|Z^+_{T_{01}}Z_{T_{01}}h\|  -  W_{Xh,2}\|Z_{T_{01}}h\| \leq 6\sqrt{s}W_{\sigma}\lambda\\
\Rightarrow &W_{Xh, 1}\|Z^+_{T_{01}}Z_{T_{01}}h\|  -  W_{\sigma}W_{Xh,2}\|Z^+_{T_{01}}Z_{T_{01}}h\| \leq 6\sqrt{s}W_{\sigma}\lambda\\
\Rightarrow & \|Z^+_{T_{01}}Z_{T_{01}}h\| \leq
\frac{W_{\sigma}}{W_{Xh, 1}-W_{Xh,2}W_{\sigma}}6\sqrt{s}\lambda.
\end{align*}
The last claim is from the combination of the second and third
inequalities.
\end{proof}


\noindent{\bf Proof of Theorem~\ref{thm_main}}
\begin{proof}
Applying Lemma~\ref{lem_d} and Lemma~\ref{lem_h}, we obtain
\begin{align*}
\|\hat\theta-\theta^*\|\leq&\|d\| + \|h\|\\
\leq& (1+W_{d,1})\|Z^+_{T_{01}}Z_{T_{01}}h\| + (W_h+W_{d,2})\|Z_{T_{01}}h\| + \|(A^TA)^{-1}A^T\epsilon\|\\
\leq & \frac{(1+W_{d,1})W_{\sigma}+(W_h+W_{d,2})W_{\sigma}^2}{W_{Xh,
1}-W_{Xh,2}W_{\sigma}}6\sqrt{s}\lambda +\|(A^TA)^{-1}A^T\epsilon\| \\
=& W_\theta\sqrt{s}\lambda + \|(A^TA)^{-1}A^T\epsilon\|.
\end{align*}
It completes the proof.
\end{proof}

\section*{Appendix B. Proofs of Theorem~\ref{thm_rhobound}, Theorem~\ref{thm_gaussian}, and Theorem~\ref{thm_kappa}}

\begin{lemma}\label{lem_Qepsilon2}
For any $Q\in\mathbb{R}^{n\times (p-r)}$, we have
\begin{align}
\mathbb{P}\left(\|Q^T\epsilon\|\leq
\Omega(\|Q\|_F\Delta\sqrt{\log{p}})\right) > 1- \Omega\left({1\over
p}\right).
\end{align}
\end{lemma}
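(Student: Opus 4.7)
The plan is to use a union bound over the $p-r$ coordinates of $Q^T\epsilon$, together with the standard sub-Gaussian tail bound for each coordinate.

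First I would observe that the $j$-th coordinate of $Q^T\epsilon$ equals $Q_j^T\epsilon = \sum_{i=1}^n Q_{ij}\epsilon_i$, which is a weighted sum of the $n$ independent centered sub-Gaussian entries $\epsilon_i$ with sub-Gaussian norm $\Delta$. A standard property of sub-Gaussians (e.g.\ Proposition 5.10 of Vershynin's notes) gives that $Q_j^T\epsilon$ is itself centered sub-Gaussian, with sub-Gaussian norm bounded by $c_1\Delta\|Q_j\|$ for an absolute constant $c_1$. Consequently, the standard concentration bound yields
\begin{equation*}
\mathbb{P}\bigl(|Q_j^T\epsilon| > t\bigr) \le 2\exp\!\bigl(-c_2 t^2/(\Delta^2\|Q_j\|^2)\bigr),
\end{equation*}
for another absolute constant $c_2>0$.

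Next I would choose a threshold $t_j = C_0\,\Delta\|Q_j\|\sqrt{\log p}$, selecting $C_0$ sufficiently large so that the right-hand side above is at most $2/p^2$. A union bound over $j=1,\dots,p-r$ then implies that with probability at least $1-2(p-r)/p^2 \ge 1-2/p$, we have $|Q_j^T\epsilon|^2 \le C_0^2\Delta^2\|Q_j\|^2\log p$ simultaneously for every $j$. Summing over $j$ and using the identity $\sum_j\|Q_j\|^2=\|Q\|_F^2$ gives
\begin{equation*}
\|Q^T\epsilon\|^2 = \sum_{j=1}^{p-r}|Q_j^T\epsilon|^2 \le C_0^2\Delta^2\|Q\|_F^2\log p,
\end{equation*}
so that $\|Q^T\epsilon\| \le C_0\,\Delta\|Q\|_F\sqrt{\log p}$ on this event, which is the desired bound up to the constant absorbed into the $\Omega(\cdot)$ notation.

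The only subtlety is in the first step, converting the assumption that the $\epsilon_i$ are sub-Gaussian with norm $\Delta$ into a clean sub-Gaussian bound on the weighted sum $Q_j^T\epsilon$; this uses only the standard fact that independent sub-Gaussians add in squared norm, so no Hanson-Wright-type machinery is needed. The bound the lemma asks for is in fact loose compared to what Hanson-Wright would give (which would be roughly $\|Q\|_F\Delta + \|Q\|\Delta\sqrt{\log p}$), so the simple union-bound approach suffices and produces the stated $\sqrt{\log p}$ factor on the Frobenius norm.
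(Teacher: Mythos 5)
Your argument is correct, but it takes a different route from the paper. The paper's proof is a one-line appeal to an external vector-concentration result (Proposition 10.2 of \citet{zhang09a}), which directly bounds the Euclidean norm as $\mathbb{P}\bigl(\|Q^T\epsilon\|> \|Q\|_F(\Omega(\Delta)+t)\bigr) \leq \exp\{-t^2/\Omega(\Delta^2)\}$, and then sets $t=\Omega(\Delta\sqrt{\log p})$. You instead give a self-contained elementary argument: each coordinate $Q_j^T\epsilon$ is sub-Gaussian with norm $O(\Delta\|Q_j\|)$, a union bound over the $p-r\leq p$ coordinates at level $t_j = C_0\Delta\|Q_j\|\sqrt{\log p}$ costs only $O(1/p)$, and summing the squared coordinate bounds recovers $\|Q\|_F^2$ exactly. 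Both proofs land on the identical conclusion; what the paper's citation buys is brevity (and, in its sharper forms, a bound whose fluctuation term does not carry the full $\sqrt{\log p}$ on the Frobenius norm), while your union bound buys transparency and avoids any dependence on the cited proposition. Your closing remark is also accurate: since the lemma's target already places $\sqrt{\log p}$ on $\|Q\|_F$, the cruder coordinate-wise bound is sufficient, and nothing of Hanson--Wright strength is needed. The only point worth stating explicitly is the sub-Gaussian tail you invoke in the first step (that a fixed linear combination of independent centered sub-Gaussians with norm $\Delta$ is sub-Gaussian with norm $O(\Delta\|Q_j\|)$), which is standard and is the same fact the paper uses, via Lemma 5.9 of \citet{Vershynin11}, in the companion Lemma~\ref{lem_Qepsilon}.
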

\begin{proof}
Since $\epsilon_i$'s are i.i.d. centered sub-Gaussian noise with
sub-Gaussian norm $\Delta$, from \citet[Proposition 10.2]{zhang09a},
one has that
\begin{align*}
  \mathbb{P}\left(\|Q^T\epsilon\|> \|Q\|_F(\Omega(\Delta)+t)\right) \leq \exp\left\{-{t^2\over
  \Omega(\Delta^2)}\right\}.
\end{align*}
Taking $t=\Omega(\Delta\sqrt{\log{p}})$, we obtain that
\[
\mathbb{P}\left(\|Q^T\epsilon\|>
\Omega(\|Q\|_F\Delta\sqrt{\log{p}})\right) \leq \Omega\left({1\over
p}\right),
\]
which indicates the claim.
\end{proof}

\begin{lemma} \label{lem_Qepsilon}
For any matrix $Q\in \mathbb{R}^{n\times m}$, we have
\begin{align*}
\mathbb{P}(\|Q^T\epsilon\|_\infty\leq
\Omega(\|Q\|_{\infty,2}\Delta\sqrt{\log{(em)}})) >1-\Omega\left(1\over
m\right).
\end{align*}
\end{lemma}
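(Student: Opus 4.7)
The plan is to reduce the $\ell_\infty$ bound to a union bound over individual columns, each of which is an inner product of a fixed vector with a centered sub-Gaussian vector. By definition $\|Q^T\epsilon\|_\infty = \max_{j\in\{1,\dots,m\}} |Q_j^T\epsilon|$, where $Q_j$ is the $j$-th column of $Q$. For each fixed $j$, the random variable $Q_j^T\epsilon = \sum_{i=1}^n Q_{ij}\epsilon_i$ is a linear combination of i.i.d.\ centered sub-Gaussian variables with sub-Gaussian norm $\Delta$, so it is itself sub-Gaussian with norm on the order of $\Delta\|Q_j\|$.

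First I would invoke exactly the single-column tail estimate used in Lemma~\ref{lem_Qepsilon2} (namely Proposition 10.2 of \citet{zhang09a}) applied to the $n\times 1$ ``matrix'' $Q_j$, which yields
\[
\mathbb{P}\bigl(|Q_j^T\epsilon| > \|Q_j\|(\Omega(\Delta) + t)\bigr) \leq \exp\!\left\{-\frac{t^2}{\Omega(\Delta^2)}\right\}.
\]
Since $\|Q_j\|\leq \|Q\|_{\infty,2}$ by the very definition of the $\ell_{\infty,2}$ norm, the right-hand side is only increased if we replace $\|Q_j\|$ by $\|Q\|_{\infty,2}$ on the left-hand side. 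Choosing $t = \Omega(\Delta\sqrt{\log(em)})$ with a sufficiently large implicit constant then gives, for each fixed $j$,
\[
\mathbb{P}\bigl(|Q_j^T\epsilon| > \Omega(\|Q\|_{\infty,2}\Delta\sqrt{\log(em)})\bigr) \leq \Omega\!\left(\frac{1}{m^2}\right).
\]

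Next I would apply a union bound over $j=1,\dots,m$:
\[
\mathbb{P}\bigl(\|Q^T\epsilon\|_\infty > \Omega(\|Q\|_{\infty,2}\Delta\sqrt{\log(em)})\bigr) \leq m\cdot\Omega\!\left(\frac{1}{m^2}\right) = \Omega\!\left(\frac{1}{m}\right),
\]
which is the complement of the stated event and hence yields the lemma. The only real calibration step is tuning the hidden constant inside the $\Omega(\cdot)$ on $t$ so that the per-column failure probability is $O(1/m^2)$; everything else is bookkeeping. The main (minor) obstacle is making sure the scaling by $\|Q\|_{\infty,2}$ rather than the column-dependent $\|Q_j\|$ is compatible with the tail inequality, which is resolved by the monotonicity $\|Q_j\|\leq \|Q\|_{\infty,2}$ before the union bound is taken.
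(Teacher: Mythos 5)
Your proposal is correct and follows essentially the same route as the paper: bound each coordinate $Q_j^T\epsilon$ by a sub-Gaussian tail inequality (the paper uses the Hoeffding-type inequality from Vershynin, you reuse the estimate from Lemma~\ref{lem_Qepsilon2}, which amounts to the same single-variable tail), then take a union bound over the $m$ columns with $t=\Omega(\|Q\|_{\infty,2}\Delta\sqrt{\log(em)})$ chosen large enough to absorb the factor of $m$. The only cosmetic difference is that you calibrate the per-column failure probability to $O(1/m^2)$ before the union bound, whereas the paper folds the $m$ directly into the exponent; both yield the stated $\Omega(1/m)$ bound.
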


\begin{proof}
Since $\epsilon_i$'s are i.i.d. centered sub-Gaussian noise with
sub-Gaussian norm $\Delta$, $Q_j^T\epsilon$ is centered sub-Gaussian
random variable with sub-Gaussian norm $\Omega(\|Q_j\|\Delta)$ where
$Q_j$ is the $j^{th}$ column of $Q$. Using Hoeffding-type
inequality~\citep[see][Lemma 5.9]{Vershynin11} and the property of
sub-Gaussian random variables, we obtain
\begin{equation*}
\mathbb{P}(|Q^T_j\epsilon| > t) \leq
\exp\left\{1-\Omega\left({t^2\over
\|Q_j\|^2\Delta^2}\right)\right\},
\end{equation*}
which indicates that
\begin{align*}
\mathbb{P}(\|Q^T\epsilon\|_\infty>t)=&\mathbb{P}(\max_{j}|Q^T_j\epsilon|>t)\\
 \leq & \sum_{j=1}^m\exp\left\{1-\Omega\left({t^2\over
\|Q_j\|^2\Delta^2}\right)\right\} \\
\leq &m\exp\left\{1-\Omega\left({t^2\over
\max_j\|Q_j\|^2\Delta^2}\right)\right\}.
\end{align*}
Taking $t=\Omega(\max_j\|Q_j\|\Delta\sqrt{\log{(em)}})${\rr(the
factor in front of $\max_j\|Q_j\|\Delta\sqrt{\log{(em)}}$ should be
large enough, particularly, at least $\sqrt{2}$ times the factor in
front of ${t^2\over \max_j\|Q_j\|^2\Delta^2}$)}, we have
$$\mathbb{P}(\|Q^T\epsilon\|_\infty>\Omega(\max_j\|Q_j\|\Delta\sqrt{\log{(em)}})) \leq
\Omega\left({1\over m}\right),$$which implies the claim.
\end{proof}

\begin{lemma}\label{lem_lambdabound}
Assume that $\Phi$ is a Gaussian random matrix. With probability at
least $1-\Omega\left({1\over m}\right)$, we have
\begin{equation*}
\|(Z^+)^TX^T(X\beta^*-y)\|_\infty=\|(Z^+)^TX^T\epsilon\|_\infty\leq
\lambda/2 
\end{equation*}
where
$\lambda=\Omega\left(\Delta\sigma^{-1}_{\min}(Z)\left(\sqrt{n+r-p}+((n+r-p)\log(m))^{1/4}\right)\sqrt{\log(em)}\right).$
\end{lemma}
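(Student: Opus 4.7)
The plan is to first rewrite $\|(Z^+)^T X^T \epsilon\|_\infty = \|(XZ^+)^T\epsilon\|_\infty$ and then invoke Lemma~\ref{lem_Qepsilon} with the choice $Q := XZ^+ \in \mathbb{R}^{n\times m}$. That lemma delivers a bound of the form $\Omega(\|XZ^+\|_{\infty,2}\,\Delta\sqrt{\log(em)})$ with probability at least $1-\Omega(1/m)$, so the whole task reduces to controlling $\|XZ^+\|_{\infty,2} = \max_{1\le j\le m}\|XZ^+e_j\|$ by $\sigma_{\min}^{-1}(Z)\bigl(\sqrt{n+r-p}+\Omega(((n+r-p)\log m)^{1/4})\bigr)$ with high probability.

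To bound this maximum I would exploit the Gaussian structure of $X$ directly, rather than invoke Theorem~\ref{thm_rhobound} with $k=1$ (which would only yield the weaker additive term $\sqrt{\log(em)}$). Decompose $X = P_{A^\perp} B$ where $P_{A^\perp} = I - A(A^TA)^{-1}A^T$, $A = \Phi V_\alpha$, and $B = \Phi V_\beta$. Since $V=[V_\alpha\ V_\beta]$ is unitary and $\Phi$ has i.i.d.\ $\mathcal{N}(0,1)$ entries, the matrix $\Phi V=[A\ B]$ is itself a standard Gaussian matrix, so $A$ and $B$ are independent Gaussian matrices. Fix $j$ and set $v_j := Z^+ e_j \in \mathbb{R}^r$, which is deterministic; note $\|v_j\|\le \sigma_{\max}(Z^+)=\sigma_{\min}^{-1}(Z)$. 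Conditionally on $A$, the vector $Bv_j$ is $\mathcal{N}(0,\|v_j\|^2 I_n)$ and independent of $A$, so $\|XZ^+e_j\|^2=\|P_{A^\perp}Bv_j\|^2$ has the same distribution as $\|v_j\|^2\,\chi^2_{n+r-p}$ (generically $A$ has full column rank $p-r$, which holds with overwhelming probability).

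Applying the standard $\chi^2$ deviation inequality $\mathbb{P}(\chi^2_k>k+2\sqrt{kt}+2t)\le e^{-t}$ with $t=\Omega(\log m)$ and union-bounding over $j\in\{1,\ldots,m\}$ yields
\[
\max_j \sqrt{\chi^2_{n+r-p}} \le \sqrt{n+r-p}+\Omega\bigl(((n+r-p)\log m)^{1/4}\bigr)+\Omega(\sqrt{\log m})
\]
with probability at least $1-\Omega(1/m)$; the square-root is by subadditivity of $\sqrt{\cdot}$ applied to the three terms of $k+2\sqrt{kt}+2t$. Combining with $\|v_j\|\le \sigma_{\min}^{-1}(Z)$ bounds $\|XZ^+\|_{\infty,2}$, and substituting into Lemma~\ref{lem_Qepsilon} (absorbing the $\sqrt{\log m}$ term into the dominant contributions) gives the required upper bound on $2\|(Z^+)^TX^T\epsilon\|_\infty$, hence on $\lambda$.

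The main obstacle is not conceptual but a matter of careful bookkeeping: (i) conditioning on $A$ to make $B$ independent and extract the $\chi^2_{n+r-p}$ distribution in one step; (ii) verifying that $A^TA$ is invertible with overwhelming probability so that $P_{A^\perp}$ is well-defined and the rank calculation $n-\mathrm{rank}(A)=n+r-p$ holds; and (iii) tuning the constants in the $\chi^2$ tail so that after union-bounding over $m$ coordinates (for the max) and then again with Lemma~\ref{lem_Qepsilon}, the combined failure probability is still $\Omega(1/m)$ rather than something weaker.
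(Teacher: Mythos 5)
Your proposal is correct and follows essentially the same route as the paper: both reduce the problem to bounding $\|XZ^+\|_{\infty,2}$, exploit the fact that $XZ^+e_j=P_{A^\perp}Bv_j$ is (conditionally on $A$) a Gaussian vector living in an $(n+r-p)$-dimensional subspace, concentrate its norm, union-bound over the $m$ columns, and finish with Lemma~\ref{lem_Qepsilon}. The only cosmetic difference is the concentration tool --- you invoke the Laurent--Massart $\chi^2$ tail after explicitly identifying the $\chi^2_{n+r-p}$ law, while the paper writes $P_{A^\perp}=PP^T$ and cites a norm-concentration inequality for the Gaussian matrix $P^T\Phi V_\beta$ --- and your extra $\Omega(\sqrt{\log m})$ term is indeed absorbed whenever $\log m\le n+r-p$, the same regime the paper implicitly assumes.
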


\begin{proof}
First from 
\begin{align*}
X^T(X\beta^*-y) = &X^T(I-A(A^TA)^{-1}A^T)(B\beta^*-c)\\
=& X^T(I-A(A^TA)^{-1}A^T)(-A\alpha^*-
\epsilon)\\
=& -X^T\epsilon,
\end{align*} 
we prove the first part of the claim
$\|(Z^+)^TX^T(X\beta^*-y)\|_\infty=\|(Z^+)^TX^T\epsilon\|_\infty$.

Let $Z^+_j$ be the $j^{th}$ column of $Z^+$, $I-A(A^TA)^{-1}A =
PP^T$ where $P\in\mathbb{R}^{n\times (n+r-p)}$ has orthogonal
columns, and $Y=P^T\Phi V_\beta\in\mathbb{R}^{(n+r-p)\times r}$. One
can verify that $Y$ is a Gaussian random matrix. Using Eq.~(3.2)
in~\citet{MendelsonPT08}, we have
\begin{align*}
&\mathbb{P}\left[{\|XZ^+_j\|^2\over n+r-p} - \|Z^+_j\|^2 \geq t\|Z^+_j\|^2\right] \\
=&\mathbb{P}\left[{\|(I-A(A^TA)^{-1}A)\Phi V_\beta Z^+_j\|^2\over n+r-p} - \|Z^+_j\|^2 \geq t\|Z^+_j\|^2\right] \\
=&\mathbb{P}\left[{\|PP^T\Phi V_\beta Z^+_j\|^2\over n+r-p} - \|Z^+_j\|^2 \geq t\|Z^+_j\|^2\right]\\
=&\mathbb{P}\left[{\|Y Z^+_j\|^2\over n+r-p} - \|Z^+_j\|^2 \geq
t\|Z^+_j\|^2\right]\leq \exp\{-\Omega((n+r-p)t^2)\}
\end{align*}
It follows that
\begin{align*}
&\mathbb{P}\left[{\|XZ^+_j\|^2\over \|Z^+_j\|^2} \geq (1+t)(n+r-p)\right] \leq \exp\{-\Omega((n+r-p)t^2)\}\\
\Rightarrow&\mathbb{P}\left[\max_{j}{\|XZ^+_j\|^2\over \|Z^+_j\|^2} \geq (1+t)(n+r-p)\right] \leq m\exp\{-\Omega((n+r-p)t^2)\}\\\Rightarrow &\mathbb{P}\left[\max_{j}{\|XZ^+_j\|} \geq \sqrt{(1+t)(n+r-p)}\max_j\|Z^+_j\|\right] \leq m\exp\{-\Omega((n+r-p)t^2)\}\\
\Rightarrow &\mathbb{P}\left[{\|XZ^+\|_{\infty,2}} \geq \sqrt{(1+t)(n+r-p)}\sigma^{-1}_{\min}(Z)\right] \leq m\exp\{-\Omega((n+r-p)t^2)\}\\
\Rightarrow &\mathbb{P}\left[{\|XZ^+\|_{\infty,2}} \geq \sqrt{(1+t)(n+r-p)}\sigma^{-1}_{\min}(Z)\right] \leq \exp\{\log(m)-\Omega((n+r-p)t^2)\}\\
\end{align*}
Taking $t=\Omega(\sqrt{\log(p)/(n+r-p)})$, we obtain $$
\mathbb{P}\left[{\|XZ^+\|_{\infty,2}} \geq
\left(\sqrt{n+r-p}+\Omega\left((n+r-p)\log(m)\right)^{1/4}\right)\sigma^{-1}_{\min}(Z)\right]
\leq \Omega\left({1\over m}\right).$$ Applying
Lemma~\ref{lem_Qepsilon}, we obtain
\begin{align*}
&\mathbb{P}\left[\|(Z^+)^TX^T\epsilon\|_\infty \geq \Omega\left(\|XZ^+\|_{\infty,2}\Delta\sqrt{\log(em)}\right)\right] \leq \Omega\left(1\over m\right)\\
\Rightarrow&\mathbb{P}\left[\|(Z^+)^TX^T\epsilon\|_\infty \geq \left(\sqrt{n+r-p}+\Omega\left((n+r-p)\log(m)\right)^{1/4}\right)\sigma^{-1}_{\min}(Z)\Omega\left(\Delta\sqrt{\log(em)}\right)\right] \\
&\leq \Omega\left(1\over m\right) \\
\Rightarrow&\mathbb{P}\left[\|(Z^+)^TX^T\epsilon\|_\infty \geq \Omega\left(\Delta\sigma^{-1}_{\min}(Z)\left(\sqrt{n+r-p}+\left((n+r-p)\log(m)\right)^{1/4}\sqrt{\log(em)}\right)\right)\right] \\
&\leq \Omega\left(1\over m\right) 
\end{align*}
which implies the claim.
\end{proof}

\noindent {\bf Proof of Theorem~\ref{thm_rhobound}}
\begin{proof}
First one can verify that for any matrices $P$ and $Q$ with
orthogonal columns, $P^T\Phi Q$ is a Gaussian random matrix. Hence
$A$ and $B$ are Gaussian matrices. Let $(I-A(A^TA)^{-1}A^T) = PP^T$
where $P\in\mathbb{R}^{n\times(n+r-p)}$ has orthogonal columns. Let
$F\subset\{1, \cdots, m\}$ be an index set with cardinality $|F|=k$.
Let $Q\in \mathbb{R}^{r\times k}$ have orthogonal columns, whose
image is the subspace spanned by columns of $Z^+$ in the index set
$F$.

\begin{align*}
& \mathbb{P}\left[\max_{h\in \mathcal{H}(Z^+_F,k)}{\|Xh\|\over \|h\|} > \sqrt{n+r-p} + \Omega(\sqrt{k})+t\right]\\
=& \mathbb{P}\left[\max_{h\in \mathcal{H}(Z^+_F,k)}{\|(I-A(A^TA)^{-1}A)Bh\|\over \|h\|}> \sqrt{n+r-p} + \Omega(\sqrt{k})+t\right]\\
=& \mathbb{P}\left[\max_{v}{\|PP^T\Phi V_\beta Qv\|\over \|Qv\|}> \sqrt{n+r-p} + \Omega(\sqrt{k})+t\right]\\
=& \mathbb{P}\left[\max_{v}{\|P^T\Phi V_\beta Qv\|\over \|Qv\|}> \sqrt{n+r-p} + \Omega(\sqrt{k})+t\right]\\
=& \mathbb{P}\left[\max_{v}{\|Yv\|\over \|v\|}> \sqrt{n+r-p} +
\Omega(\sqrt{k})+t\right] \\
\leq & 2\exp(-\Omega(t^2)).
\end{align*}
where $Y=P^T\Phi V_{\beta}\in\mathbb{R}^{(n+r-p)\times r}$ is Gaussian random matrix and the last inequality uses Theorem 5.39  \citep{Vershynin11}.
Since $\mathcal{H}(Z^+, k) = \cup_{F\subset
\{1,\cdots,m\}}\mathcal{H}(Z^+_F,k)$, we have
\begin{align*}
 \mathbb{P}\left[\max_{h\in \mathcal{H}(Z^+, k)}{\|Xh\|\over \|h\|} > \sqrt{n+r-p}+\Omega(\sqrt{k})+t \right] \leq& \left(\begin{array}{c}
                                                                     m\\k
                                                                    \end{array}
\right)2\exp(-\Omega(t^2)) \leq 2\exp(k\log(em/k)-\Omega(t^2)).
\end{align*}
Taking $t=\Omega(\sqrt{k\log(em/k)})$, we obtain
\begin{align*}
&\mathbb{P}\left[\max_{h\in \mathcal{H}(Z^+, k)}{\|Xh\|\over \|h\|} > \sqrt{n+r-p}+\Omega(\sqrt{k\log(em/k)})\right]\\
=& \mathbb{P}\left[\sqrt{\rho^+_{X,Z^+}(k)}>
\sqrt{n+r-p}+\Omega(\sqrt{k\log(em/k)})\right]\leq
2\exp(-\Omega(k\log(em/k))).
\end{align*}
Similarly, we have $\mathbb{P}\left[\sqrt{\rho^-_{X,Z^+}(k)}>
\sqrt{n+r-p}-\Omega(\sqrt{k\log(em/k)})\right] \leq
2\exp(-\Omega(k\log(em/k))).$ Denote
\begin{align*}
        \bar{Y}=\Phi V\left[\begin{array}{cc}
                        \bold{I}_{(p-r)\times(p-r)}&0\\
              0&Q
                       \end{array}\right] \in \mathbb{R}^{n\times (p-r+k)},
\end{align*}
which is a Gaussian random matrix. We have
\begin{align*}
 & \mathbb{P}\left[\max_{h\in \mathbb{R}^{p-r}\times\mathcal{H}(Z^+_F,k)}{\|\left[A~B\right] h\|\over \|h\|} > \sqrt{n} + \Omega(\sqrt{k+p-r})+t\right]\\
= & \mathbb{P}\left[\max_{h\in \mathbb{R}^{p-r}\times\mathcal{H}(Z^+_F,k)}{\|\Phi Vh\|\over \|h\|} > \sqrt{n} + \Omega(\sqrt{k+p-r})+t\right]\\
= & \mathbb{P}\left[\max_{u\in \mathbb{R}^{p-r}, v\in \mathbb{R}^{k}}{\left\|\Phi V\left[\begin{array}{cc}\bold{I}&0\\0&Q\end{array}\right]\left[\begin{array}{c}u\\v\end{array}\right]\right\|/ \left\|\left[\begin{array}{cc}\bold{I}&0\\0&Q\end{array}\right]\left[\begin{array}{c}u\\v\end{array}\right]\right\|} > \sqrt{n} + \Omega(\sqrt{k+p-r})+t\right]\\
= & \mathbb{P}\left[\max_{u\in \mathbb{R}^{p-r}, v\in
\mathbb{R}^{k}}{\left\|\bar{Y}\left[\begin{array}{c}u\\v\end{array}\right]\right\|/\left\|\left[\begin{array}{c}u\\v\end{array}\right]\right\|}
> \sqrt{n} + \Omega(\sqrt{k+p-r})+t\right] \leq
2\exp(-\Omega(t^2)).
\end{align*}
Since $
\mathbb{R}^{p-r}\times\mathcal{H}(Z^+,k)=\cup_{F\subset\{1,\cdots,p\}}
\mathbb{R}^{p-r}\times\mathcal{H}(Z^+_F,k)$, we have
\begin{align*}
&\mathbb{P}\left[\rho^+_{\left[A~B\right], Z^+}(p-r, k) > \sqrt{n} + \Omega(\sqrt{k+p-r})+t\right]\\
=&\mathbb{P}\left[\max_{h\in \mathbb{R}^{p-r}\times\mathcal{H}(Z^+,k)}{\|\left[A~B\right] h\|\over \|h\|} > \sqrt{n} + \Omega(\sqrt{k+p-r})+t\right]\\
\leq & \left(\begin{array}{c}
              m\\
k
             \end{array}
\right)2\exp(-\Omega(t^2)) \leq 2\exp\{k\log(em/k)
-\Omega(t^2)\},
\end{align*}
Taking $t=\Omega(\sqrt{k\log(em/k)})$, we obtain the
third claim. The proof of the last inequality can be obtained
similarly.
\end{proof}

\noindent {\bf Proof of Lemma~\ref{lem_noiseless0}}
\begin{proof}
Using Eq.~\eqref{eqn_rho+} and Eq.~\eqref{eqn_rho-}, we have
\begin{align*}
  &W_{Xh, 1} - W_{Xh,2} W_{\sigma} \\
 =& \rho^-_{X,Z^+}(s+l) - 6\sigma^{-1}_{\min}(Z)\rho^+_{X,Z^+}(s+l)\sqrt{s/l}\frac{\sigma_{\max}(Z)\sigma_{\min}(Z)}{\sigma_{\min}(Z)-3\sqrt{s/l}\sigma_{\max}(Z)}\\\
  \geq &\rho^-_{X,Z^+}(s+l) - {6\over \sqrt{{l\over(s\kappa^2)}}-3}\rho^+_{X,Z^+}(s+l)\\
  \geq & \rho^-_{X,Z^+}(s+l) - {6\over 7}\rho^+_{X,Z^+}(s+l) \\
  \geq & (n+r-p) - \Omega \left(\sqrt{(n+r-p)(s+l)\log\left({em\over s+l}\right)}\right) \\
  &-{6\over 7}\left[(n+r-p) + \Omega \left(\sqrt{(n+r-p)(s+l)\log(em/l)}\right)\right] \\
  =& {1\over 7}(n+r-p) - \Omega
  \left(\sqrt{(n+r-p)(s+l)\log\left({em\over s+l}\right)}\right)
\end{align*}
holds with probability at least $1-2\exp\{-\Omega((s+l)\log(em/(s+l)))\}$. 
\end{proof}

\begin{lemma}\label{lem_Ainverse}
Assume that $A\in\mathbb{R}^{n\times (p-r)}$ with $n> (p-r)$ and
$\Phi$ is a Gaussian random matrix.  We have
\begin{align}
\mathbb{P}\left[\|(A^TA)^{-1}A^T\|_F \leq  {\sqrt{p-r}\over
\sqrt{n}-\sqrt{p-r} -\Omega(\sqrt{\log n})}\right] \geq &
1-\Omega\left(1\over n\right)\\
\mathbb{P}\left[\sigma_{\min}(A^TA) \leq
n-\Omega(\sqrt{n(p-r)})\right] \geq & 1-\Omega\left(1\over n\right).
\end{align}
\end{lemma}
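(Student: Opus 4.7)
The plan is to reduce both statements to standard concentration inequalities for the extreme singular values of Gaussian random matrices. Since $A = \Phi V_\alpha$ with $V_\alpha \in \mathbb{R}^{p \times (p-r)}$ having orthonormal columns (by construction in Section~\ref{sec:simplification}), and since $\Phi$ has i.i.d.\ $\mathcal{N}(0,1)$ entries, the columns of $A$ are i.i.d.\ $\mathcal{N}(0, I_n)$ vectors, so $A \in \mathbb{R}^{n \times (p-r)}$ is itself a standard Gaussian random matrix. I can therefore invoke Davidson--Szarek/Gordon-type concentration (e.g., Theorem~5.32 and Corollary~5.35 in \citet{Vershynin11}): for every $t \geq 0$,
\[
\mathbb{P}\!\left[\sigma_{\min}(A) < \sqrt{n} - \sqrt{p-r} - t\right] \leq 2\exp(-t^2/2),
\]
and, since $\sigma_{\min}$ is $1$-Lipschitz in the Frobenius norm, the two-sided Gaussian concentration $\mathbb{P}[|\sigma_{\min}(A) - \mathbb{E}\sigma_{\min}(A)| > t] \leq 2\exp(-t^2/2)$ together with Gordon's mean estimate $\mathbb{E}\sigma_{\min}(A) \leq \sqrt{n} - \sqrt{p-r} + O(1)$.

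For the first bound, I would start from the compact SVD $A = U_A \Sigma_A V_A^T$, where $\Sigma_A \in \mathbb{R}^{(p-r) \times (p-r)}$ is almost surely invertible, so that $(A^TA)^{-1}A^T = V_A \Sigma_A^{-1} U_A^T$ shares its singular values with $\Sigma_A^{-1}$. Hence
\[
\|(A^TA)^{-1}A^T\|_F^2 \;=\; \sum_{i=1}^{p-r} \sigma_i(A)^{-2} \;\leq\; \frac{p-r}{\sigma_{\min}(A)^2}.
\]
Choosing $t = c\sqrt{\log n}$ in the one-sided Davidson--Szarek bound for a constant $c$ large enough that $2\exp(-c^2(\log n)/2) = \Omega(1/n)$, I obtain $\sigma_{\min}(A) \geq \sqrt{n} - \sqrt{p-r} - \Omega(\sqrt{\log n})$ with probability at least $1 - \Omega(1/n)$, and the first claim follows after taking a square root and dividing by $\sigma_{\min}(A)$.

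For the second bound, which asserts an \emph{upper} bound on $\sigma_{\min}(A^TA) = \sigma_{\min}(A)^2$, I would combine the two-sided Gaussian concentration of $\sigma_{\min}(A)$ with Gordon's mean estimate to get $\sigma_{\min}(A) \leq \sqrt{n} - \sqrt{p-r} + \Omega(\sqrt{\log n})$ with probability at least $1 - \Omega(1/n)$. Squaring and expanding,
\[
\sigma_{\min}(A^TA) \;\leq\; n - 2\sqrt{n(p-r)} + (p-r) + \Omega\!\left(\sqrt{n \log n}\right),
\]
and in the regime of the lemma (where $p-r < n$ and, by Assumption~\ref{ass_1}, $p-r = o(n)$ in the settings where the lemma is used), the $-2\sqrt{n(p-r)}$ term dominates all remaining corrections up to a constant factor, yielding $\sigma_{\min}(A^TA) \leq n - \Omega(\sqrt{n(p-r)})$. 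A union bound combines the two probability-$\Omega(1/n)$ events.

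The main obstacle is the upper-bound direction on $\sigma_{\min}(A)$, because the classical Davidson--Szarek bound for tall Gaussian matrices is one-sided: it controls only the lower tail of $\sigma_{\min}$. I would resolve this by appealing to Gaussian concentration of $1$-Lipschitz functions on $\mathbb{R}^{n(p-r)}$, combined with Gordon's estimate of $\mathbb{E}\sigma_{\min}(A)$; a looser route via $\sigma_{\min}(A) \leq \|A v\|$ for a fixed unit vector $v$ plus a chi-squared tail bound is also possible but gives a weaker deficit. A secondary bookkeeping issue is to verify that the regime $p-r \ll n$ is strong enough that the $\sqrt{n(p-r)}$ deficit dominates both the residual $(p-r)$ and the $\sqrt{n\log n}$ noise from concentration, which is precisely where Assumption~\ref{ass_1} enters.
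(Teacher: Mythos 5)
Your treatment of the first inequality matches the paper's proof essentially step for step: both observe that $A=\Phi V_\alpha$ is a standard Gaussian matrix, reduce via $\|(A^TA)^{-1}A^T\|_F^2=\sum_{i=1}^{p-r}\sigma_i^{-2}(A)\leq (p-r)\sigma_{\min}^{-2}(A)$, and apply the one-sided Davidson--Szarek lower-tail bound (Corollary~5.35 in \citet{Vershynin11}) with $t=\Omega(\sqrt{\log n})$. The problem is the second inequality, where you have taken the printed ``$\leq$'' at face value and set out to prove an \emph{upper} tail for $\sigma_{\min}(A)$. The paper's own proof does the opposite: it squares the same lower-tail event \eqref{eqn_lem:Ainverse.1} and uses $\sigma_{\min}(A^TA)=\sigma_{\min}^2(A)$ to get $\sigma_{\min}(A^TA)\geq(\sqrt{n}-\sqrt{p-r}-\Omega(\sqrt{\log n}))^2\geq n-\Omega(\sqrt{n(p-r)})-\Omega(\sqrt{n\log n})$. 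In other words, the intended second claim is the \emph{lower} bound $\sigma_{\min}(A^TA)\geq n-\Omega(\sqrt{n(p-r)})$, and the inequality direction in the statement is a typo. This is confirmed by how the lemma is consumed in the proof of Theorem~\ref{thm_gaussian}, where $W_{d,1}$ and $W_{d,2}$ contain $\sigma_{\min}^{-1}(A^TA)$ and are bounded above by quantities with denominator $n-\sqrt{n(p-r)}$ --- which requires exactly the lower bound. So the intended second claim is a one-line corollary of your first part, with no new concentration input.

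Beyond the misdirection, your route to the literal upper-bound statement has a genuine gap on its own terms. The load-bearing step is the asserted ``Gordon mean estimate'' $\mathbb{E}\,\sigma_{\min}(A)\leq\sqrt{n}-\sqrt{p-r}+O(1)$, but Gordon's comparison theorem gives only the lower bound $\mathbb{E}\,\sigma_{\min}(A)\geq\sqrt{n}-\sqrt{p-r}$; no standard non-asymptotic result supplies the matching upper estimate you need, and the fallbacks you mention do not close it. A fixed test vector gives $\sigma_{\min}(A)\leq\|Av\|\approx\sqrt{n}$ with no deficit at all, and the distance-to-span argument gives $\mathbb{E}\,\sigma_{\min}(A)\leq\sqrt{n-(p-r)+1}$, whose squared deficit is of order $p-r$ rather than $\sqrt{n(p-r)}$; since $p-r\leq\sqrt{n(p-r)}$ in the regime $p-r\leq n$, the bound $\sigma_{\min}(A^TA)\lesssim n-(p-r)$ is strictly weaker than the target $n-\Omega(\sqrt{n(p-r)})$. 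The Bai--Yin asymptotics you implicitly invoke would suffice but are not finite-sample statements, and your appeal to Assumption~\ref{ass_1} imports hypotheses the lemma does not carry. The fix is simply to discard that paragraph and square the first part's lower-tail bound, as the paper does (noting, if one wants to be scrupulous, that absorbing the residual $\Omega(\sqrt{n\log n})$ term into $\Omega(\sqrt{n(p-r)})$ tacitly uses $\log n=O(p-r)$).
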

\begin{proof}
Since $A=\Phi V_\alpha$ and $V_\alpha$ has orthogonal columns, we
know that $A\in \mathbb{R}^{n \times (p-r)}$ is a Gaussian random
matrix. Denote $\sigma_i(A)$ as the $i^{th}$ largest positive
singular value. Then we have
\begin{align*}
\|(A^TA)^{-1}A^T\|^2_F = \sum_{i=1}^{p-r}\sigma_i^{-2}(A) \leq
\sum_{i=1}^{p-r}\sigma_{\min}^{-2}(A)=(p-r)\sigma_{\min}^{-2}(A).
\end{align*}
Using Corollary~5.35~\citep{Vershynin11}, we have
\begin{equation}\label{eqn_lem:Ainverse.1}
\mathbb{P}[\sigma_{\min}(A) \geq
\sqrt{n}-\sqrt{p-r}-\Omega(\sqrt{\log n})] \geq
1-\Omega\left({1\over n}\right).
\end{equation} \
Hence the first claim follows from
\begin{align*}
&\mathbb{P}\left[\|(A^TA)^{-1}A^T\|_F \leq  {\sqrt{p-r}\over \sqrt{n}-\sqrt{p-r} -\Omega(\sqrt{\log n})}\right] \\
\geq & \mathbb{P}\left[\sqrt{p-r}\sigma_{\min}^{-1}(A)
\leq {\sqrt{p-r}\over \sqrt{n}-\sqrt{p-r} -\Omega(\sqrt{\log n})} \right]\\
= & \mathbb{P}\left[\sigma_{\min}(A) \geq \sqrt{n}-\sqrt{p-r}
-\Omega(\sqrt{\log n}) \right] \geq 1-\Omega\left(1\over n\right).
\end{align*}
The second inequality is obtained directly from
Eq.~\eqref{eqn_lem:Ainverse.1} with the following relationship:
$\sigma_{\min}(A^TA) = \sigma_{\min}^2(A)$.
\end{proof}

\noindent {\bf Proof of Theorem~\ref{thm_gaussian}}
\begin{proof}
Let $l=\lceil(10\kappa)^2s\rceil$. First let us consider the second term of
Eq.~\eqref{eqn_thm_main}. Using Lemma~\ref{lem_Qepsilon2} and
Lemma~\ref{lem_Ainverse}, we have that with probability at least
$1-\Omega(p^{-1})$, the following holds
\begin{align*}
  \left\|(A^TA)^{-1}A^T\epsilon \right\| \leq {\sqrt{p-r}\sqrt{\log
      p}\Delta\over \sqrt{n}-\sqrt{p-r}-\Omega(\sqrt{\log n})}
      \leq
  \Omega\left({\sqrt{(p-r)\log p}\over \sqrt{n}}\right)\leq
  \Omega\left({s\sqrt{\log p}\over \sqrt{n}}\right).
\end{align*}
Now we consider the first term of Eq.~\eqref{eqn_thm_main}. Using
Eq.~\eqref{eqn_barrho+} and Eq.~\eqref{eqn_barrho-}, we derive the
following:
\begin{align*}
  W_{d,1} =& \Omega \biggl( \sigma^{-1}_{\min}(A^TA)\Big(
      \rho^+_{[A,B], Z^+}(p-r,s+l+p-r)-\rho^-_{[A,B], Z^+}(p-r,s+l+p-r)\Big) \biggr) \\
  \leq& \Omega\left(\sqrt{n(s+l+2(p-r))\log{ep\over s+l+p-r}}\over
    n-\sqrt{n(p-r)}\right)\leq \Omega\left(\sqrt{s\log{ep\over s}\over
      n}\right)
\end{align*}
with probability at least $1-\exp\{\Omega(-s\log(ep/s))\}$.
Similarly,  with the same probability, we have
$W_{d,2}=\Omega\left(\sigma^{-1}_{\min}(Z)\sqrt{s\log{(ep/s)}\over
n}\right)$. $W_{\sigma}$ and $W_h$ are
bounded by $\Omega(\sigma_{\max}(Z))$ and
$\Omega(\sigma_{\min}^{-1}(Z))$ respectively. From
Lemma~\ref{lem_noiseless0}, we have $W_{Xh,
1}-W_{Xh,2}W_{\sigma}\geq \Omega(n)$. Now we are ready to estimate
the upper bound of $W_\theta$ with holding probability at least
$1-\Omega(p^{-1})-\exp\{\Omega(-s\log (ep/s))\}$:
\begin{align*}
W_\theta =
&6\frac{(1+W_{d,1})W_{\sigma}+(W_h+W_{d,2})W_{\sigma}^2}{W_{Xh,
1}-W_{Xh,2}W_{\sigma}}\\
\leq & \Omega\left({\sigma_{max}(Z)\left(1+\sqrt{s\log(ep/s)\over n}\right) + \sigma_{min}^{-1}(Z)\sigma_{max}^2(Z)\left(1+\sqrt{s\log(ep/s)\over n}\right) \over n}\right)\\
= &\Omega \left(n^{-1}\left(\sigma_{max}(Z) +
\sigma_{min}^{-1}(Z)\sigma_{max}^2(Z)\right)\left(1+\sqrt{s\log(ep/s)\over
n}\right)\right).
\end{align*}

Next let us consider the value of $\lambda$. From
Lemma~\ref{lem_lambdabound}, we have
\begin{align*}
\lambda\leq&\Omega\biggl(\Delta\sigma^{-1}_{\min}(Z)\sqrt{\log(em)}\left(\sqrt{n+r-p}+((n+r-p)\log(p))^{1/4}\right)\biggr)\\
  \leq&\Omega \left(\sigma^{-1}_{\min}(Z)\sqrt{n\log m}\right)\\
  \leq&\Omega\left(\sigma^{-1}_{\min}(Z)\sqrt{n\log p}\right)
\end{align*}
with probability at least $1-\Omega(p^{-1})-\Omega(m^{-1})$.

Finally we can express the estimate bound in
Eq.~\eqref{eqn_thm_main} as
\begin{equation}
\begin{aligned}
  &\|\theta^*-\hat{\theta}\|\\
  \leq& \Omega \left(n^{-1}\left(\sigma_{max}(Z) + \sigma_{min}^{-1}(Z)\sigma_{max}^2(Z)\right)\left(1+\sqrt{s\log(ep/s)\over n}\right)\sigma_{\min}^{-1}(Z)\sqrt{sn\log p}\right)+ \Omega\left(\sqrt{s\log p\over {n}}\right)\\
  =&\Omega\Biggl((\kappa+\kappa^2)\left(1+\sqrt{s\log(ep/s)\over
        n}\right)\sqrt{s\log p\over {n}}+\sqrt{s\log p\over
        {n}}\Biggl)\\
=&\Omega\Biggl(\sqrt{s\log p\over
        {n}}\Biggl)\label{eqn_thetabound}
\end{aligned}
\end{equation}
with probability at least
$1-\Omega(p^{-1})-\Omega(m^{-1})-\exp\{\Omega(-s\log (ep/s))\}$.
\end{proof}

\noindent {\bf Proof of Theorem~\ref{thm_kappa}}
\begin{proof}
Denote each row of $D$ as $d_k^T$, $k=1,\cdots, m$. The manner to
generate $d_k$ indicates that all $d_k$'s are independent and
$\mathbb{E}(d_{ki}^2) = {2\over p}$ and $\mathbb{E}(d_{ki}d_{kj}) =
{-2\over p(p-1)}$ for any $i\neq j$. Hence we have
\begin{equation}
Q:={p\over 2}\mathbb{E}(d_kd^T_k) = \left[\begin{array}{cccc}
1&-{1\over p-1}&\cdots&-{1\over p-1}\\
-{1\over p-1}&1&\cdots&-{1\over p-1}\\
\cdots&\cdots&\cdots&\cdots\\
-{1\over p-1}&-{1\over p-1}&\cdots&1
\end{array}\right].
\end{equation}
One can verify that all $p-1$ nonzero eigenvalues of $Q$ are
identical and positive. Thus, we can decompose $Q$ as $Q=\gamma
U_QU_Q^T$ where $\gamma>0$ and $U_Q\in \mathbb{R}^{p\times (p-1)}$
such that $U_Q$ has orthogonal columns. Let
$\tilde{d}_k={(p/2\gamma)^{1/2}}U_Q^Td_k$. It is easy to see that all
$\tilde{d}_k$'s are independent. 
\begin{equation}
\mathbb{E}(\tilde{d}_k\tilde{d}_k^T)={p\over
2\gamma}\mathbb{E}(U_Q^Td_kd_k^TU_Q) = {1\over \gamma}U_Q^TQU_Q =
\bold{I}_{(p-1)\times(p-1)}.
\end{equation}
Hence $\tilde{d}_k$'s are independent isotropic random vectors. Next
we can verify that $\tilde{d}_k$'s are sub-Gaussian random vectors
since each entry of $\tilde{d}_k$ is bounded such that for any fixed
$x\in\mathbb{R}^{p-1}$ the inner product $\langle x, \tilde{d}_k
\rangle$ is bounded. From Definition 5.22 in \citet{Vershynin11}, we
know that $\tilde{d}_k$'s are sub-Gaussian random vectors. We can construct $\tilde{D}$ by
$\tilde{D}={(p/2\gamma)^{1/2}}DU_Q$, that is, the $k^{th}$ row of
$\tilde{D}$ is $\tilde{d}_k^T$. Using
Theorem 5.39 in \citet{Vershynin11}, we obtain that with probability
at least $1-2\exp\{-\Omega(p)\}$, one has
\begin{equation}
\sqrt{m}-\Omega(\sqrt{p}) \leq \sigma_{\min}(\tilde{D}) \leq
\sigma_{\max}(\tilde{D}) \leq \sqrt{m}+\Omega(\sqrt{p})
\end{equation}
Note that all singular values of $\tilde{D}$ are proportional to all
nonzero singular values of $D$. Hence, we have
${\sigma_{\max}(\tilde{D})\over
\sigma_{\min}(\tilde{D})}={\sigma_{\max}(D)\over \sigma_{\min}(D)}$,
which completes the proof.
\end{proof}

\end{document}